  \crefname{equation}{Eq.}{Eqs.}%
  \Crefname{equation}{Equation}{Equations}%
  \crefname{figure}{Fig.}{Figs.}%
  \Crefname{figure}{Figure}{Figures}%
  \crefname{table}{Table}{Tables}%
  \Crefname{table}{Table}{Tables}%
\newcommand\scomment[1]{}
\DeclareMathOperator{\PP}{\mathrm{P}}
\DeclareMathOperator{\R}{\mathbb{R}}
\DeclareMathOperator{\rank}{rank}
\DeclareMathOperator{\diag}{diag}
\DeclareMathOperator{\Unif}{Unif}
\newcommand\ones[0]{\mathbf 1}
\newcommand\norm[1]{\left\lVert#1\right\rVert}
\newcommand\angleb[1]{\left\langle#1\right\rangle}
\newcommand\tsc[1]{\text{\textsc{#1}}}
\newcommand\ttt[1]{\texttt{#1}}
\newcommand\mc[1]{\mathcal{#1}}
\newcommand\mb[1]{\mathbb{#1}}
\newcommand\mt[1]{\mathit{#1}}
\newcommand\HH{\mc H}
\newcommand\HHhat{\hat{\mc H}}
\newcommand\forw{\mathbf{Forw}}
\newcommand\back{\mathbf{Back}}
\newcommand\azizz{Azizzadenesheli et al. }
\newcommand\Obs{\textup{\text{Obs}}}
\newcommand\hist{\mathit{hist}}
\newcommand\ttest{\mathit{test}}
\newcommand\hists{\mathit{hists}}
\newcommand\nobs{n^{\mt{obs}}}
\newcommand\full{\mt{full}}
\newcommand\Afull{\mc A_{\mt{full}}}
\newcommand\acts{\text{acts}}
\newcommand\simobs{M^{ao} {M^a}^{-1}}
\newcommand{\aonnobs}{(|\mc A||\mc O|)^{2(\nobs+1)}}
\newcommand{\aopnobs}{(|\mc A||\mc O|)^{\nobs+2}}
\newcommand{\aominnobs}{(|\mc A||\mc O|)^{2\nobs}}
\newcommand{\aonnobsmin}{(|\mc A||\mc O|)^{2\nobs+1}}
\newcommand\eTs{\mc{\hat T}}
\newcommand\eZs{\mc{\hat Z}}
\newcommand\eTmat[1]{\hat T^{#1}}
\newcommand\eOmat[1]{\hat O^{#1}}
\begin{document}
\title{Toward Learning POMDPs Beyond \\ Full-Rank Actions and State Observability}
\titlerunning{Learning POMDPs Beyond Full-Rank Actions and State Observability}
%
\author{Seiji Shaw\inst{1} \and
Travis Manderson\inst{1} \and
Chad Kessens\inst{2} \and
Nicholas Roy \inst{1}}
\authorrunning{S. Shaw et al.}
%
\institute{MIT Computer Science and Artificial Intelligence Lab, 
\\51 Vassar St, Cambridge, MA 02139, USA 
\\\email{\{seijis, travislm\}@mit.edu, nickroy@csail.mit.edu}
\and
DEVCOM Army Research Laboratory\\
2800 Powder Mill Road, Adelphi, MD 20783, USA\\
\email{chad.c.kessens.civ@army.mil}}
\maketitle              
\begin{abstract}
We are interested in enabling autonomous agents to learn and reason about systems with hidden states, such as locking mechanisms.
We cast this problem as learning the parameters of a discrete Partially Observable Markov Decision Process (POMDP).
The agent begins with knowledge of the POMDP's actions and observation spaces, but not its state space, transitions, or observation models. 
These properties must be constructed from a sequence of actions and observations. 
Spectral approaches to learning models of partially observable domains, such as Predictive State Representations (PSRs), learn representations of state that are sufficient to predict future outcomes. 
PSR models, however, do not have explicit transition and observation system models that can be used with different reward functions to solve different planning problems.
Under a mild set of rankness assumptions on the products of transition and observation matrices, we show how PSRs learn POMDP matrices up to a similarity transform, and this transform may be estimated via tensor decomposition methods.
Our method learns observation matrices and transition matrices up to a partition of states, where the states in a single partition have the same observation distributions corresponding to actions whose transition matrices are full-rank.
Our experiments suggest that explicit observation and transition likelihoods can be leveraged to generate new plans for different goals and reward functions after the model has been learned.
We also show that learning a POMDP beyond a partition of states is impossible from sequential data by constructing two POMDPs that agree on all observation distributions but differ in their transition dynamics.
\keywords{POMDPs \and Continual Learning \and Task Planning}
\end{abstract}

\vspace{-.2in}
\section{Introduction}\label{sec:introduction}
\vspace{-.1in}





\scomment{
When planning and acting in the real world, humans learn and reason about discrete hidden information.

Use doors, toolboxes.

Inspired by the application of learning discrete locking structures... cite Martin-Martin, and Ben's sensorized box (if properly sensorized).
}

When planning and acting in the real world, intelligent agents must learn and reason about hidden information.
Of great inspiration to us is the work of Baum et al. \cite{baumOpeningLockbox2017}, which shows that a real autonomous robot can infer a cabinet's locking mechanism from a hypothesis set of mechanisms through interaction.
We are interested in a symbolic variant of the problem where autonomous agents must learn, through interaction, the dynamics of a system with hidden states, without any knowledge of the system state and transitions beforehand.
The agent should also compute explicit estimates of transition and observation likelihoods to support downstream operations that manipulate the model, such as task specification to direct agent behavior.
Our problem is modeled as learning the parameters of a discrete Partially Observable Markov Decision Process (POMDP) from a sequence of actions and observations acquired through random exploration.

One common approach to learning a representation of a probabilistic latent-variable model like a POMDP is to apply a spectral decomposition to a matrix that encodes correlations of the observable random variables \cite{hsuSpectralAlgorithm2012,balleSpectralLearning2014}.
For POMDPs, spectral methods may be applied to a \textit{Hankel matrix}, which represents the correlation between past and future observations conditioned on a sequence of past and future actions.
The decomposition of this matrix can be used to derive a (linear) Predictive State Representation \cite{bootsClosingLearningplanning2011,balleSpectralLearning2014}.
The `state' of the PSR is a sufficient statistic that can be used to predict the likelihood of future observations given a possible sequence of actions.
This prediction capability allows PSRs to be used as black-box models for reinforcement learning \cite{liuWhenPartially2022,zhanPACReinforcement2022}; however, transition and observation likelihoods cannot be directly recovered from a PSR. 
This lack of interpretability makes these models difficult to manipulate, like changing the goal or reward function for planning. 
If a goal state of the agent changes, then the PSR must be relearned for the new task, since the underlying state cannot be accessed.

There are other POMDP-learning algorithms that yield estimates of the full model, e.g. observation and transition likelihoods, but under assumptions that ultimately restrict the class of POMDPs that can be learned.
Approaches introduced by \azizz \cite{azizzadenesheliReinforcementLearning2016} and Guo et al. \cite{guoPACRL2016} utilize tensor decompositions to recover observation distributions for each action whose transition matrix is full-rank.
To recover the transitions, however, these approaches must also make the assumption that for each action, the corresponding diagonal observation matrices must be unique for every state, which implies every state has a unique observation distribution.
While, full-rank transitions are common when modeling many real-world POMDPs, especially when actions may `fail' with some probability,  
many real-world systems have \textit{aliased states}, e.g. states that do not have distinct observation distributions associated with every action. 
Systems that fall in this class include the locking mechanisms of Baum et al. \cite{baumOpeningLockbox2017} or many standard POMDPs in the literature, like Tiger \cite{kaelbling_planning_1998}.

We investigate the relationship between PSRs and tensor decomposition methods to learn a broader class of POMDPs than existing tensor methods. 
A result established by Carlyle and Paz \cite{carlyleRealizationsStochastic1971a} states that PSRs learn transitions and diagonal observation matrices up to an unknown basis. 
We then reformulate tensor decomposition methods to estimate the unknown basis to recover the original basis. 
Our modification of tensor decomposition methods for hidden state inference allows us to simultaneously leverage all observation distributions from \textit{all} actions with full-rank transition methods all at once, rather than a per-action basis like previous approaches \cite{azizzadenesheliReinforcementLearning2016,guoPACRL2016}.
Should the collection of observation distributions of all full-rank actions be unique for each state, like  Tiger, we may recover the full POMDP.
Should there exist states that share the same set of observation distributions when aggregated across actions, we learn transitions between partitions of states, where states in a single partition share the same observation distributions over all actions.
We also show that when restricted to sequential data, learning transition and observation up to observability partitions \textit{cannot be improved}. 
We construct an example of two POMDPs whose dynamics differ between aliased states but yield the same distribution over all future observations under an arbitrary sequence of actions.

Learning explicit transition and observation matrices is valuable because these models enable reasoning over environment dynamics. 
Whereas black-box PSRs only provide predictive likelihoods of observation sequences, access to explicit transition matrices and diagonal observation matrices allows for the specification of rewards after the model has been learned.
Our experimental results suggest that our method can correctly learn partition-level transitions and observations and that these likelihoods are necessary to correctly direct agent behavior in POMDPs with very noisy observations.

\vspace{-.1in}
\section{Problem Setting}\label{sec:problem}
\vspace{-.1in}


We assume that the ground truth system can be described as a discrete POMDP, a tuple $(\mc S, \mc T, \mc A, \mc O, \mc Z, b_0, R, \gamma)$.
The set $\mc S = \{s^1, s^2, \dots\}$ is a discrete set of states, $\mc A$ is a discrete set of actions, and $\mc O = \{o^1, o^2, \dots\}$ is a discrete set of observations.
$\mc T = \{T^a : a \in \mc A\}$ denotes a set of row-stochastic state transition matrices.
The element $T^a_{ij} = \PP(s_{t+1}=s^j | s_t=s^i, a_t=a)$ denotes the probability of transition to state $s^j$ from state $s^i$ after taking action $a$ at time $t$.
The set $\mc Z = \{O^{ao}: (a, o) \in \mc A \times \mc O\}$ describes a collection of diagonal matrices, where $O^{ao}_{ii} = \PP(o_t=o| s_t=s^i, a_t=a)$ denotes the emission probability of $o$ under action $a$ when \textit{leaving} state $s^i$.
The distribution $b_0 \in \Delta(\mc S)$ describes the distribution over the initial state.
The constant $\gamma \in (0, 1)$ is the reward discount factor.

The agent begins acting in a POMDP with access to the action and observation spaces $\mc A$ and $\mc O$.
Under a uniform, memoryless random exploration policy $a_t \sim \Unif(\mc A)$ for all $t \geq 1$, the agent collects a dataset $\mc D$, which is a long string of actions and observations $\mc D = (a_1, o_1, a_2, o_2, \dots)$.
From this data, we wish the agent to estimate the number of hidden states $|\mc S|$, transition matrices $\eTs = \{\eTmat{a} : a \in \mc A\}$, and diagonal observation matrices  $\eZs = \{\eOmat{ao} : (a, o) \in \mc A \times \mc O\}$. 
We may also require the agent to learn a tabular reward $R$ function by including rewards as observations \cite{izadiPointBasedPlanning2008}.
We evaluate the approach by measuring the error of the estimated model parameters against those of the  ground-truth POMDP.
Another evaluation gauges the performance of the agent under a planning algorithm after the POMDP is inferred from $\mc D$.
The last is by evaluating the behavior of a planner at user-designated task after model learning.


For notational convenience, we will often need to refer to strings of actions and observations. 
We call a string $(a_1, o_1, \dots, a_t, o_t)$ that is observed by the agent in the past a \textit{history}, often abbreviated as $\hist$.
A string $(a_{t+1}, o_{t+1}, \dots, a_n, o_n)$ the agent may observe in the future will be called a $\ttest$ \cite{littmanPredictiveRepresentations2001,singhPredictiveState2004}. 
To abbreviate the expression of the likelihood of observations conditioned on the actions, we write \begin{align*}
    p(hist, test) &= \PP(o_1, \dots, o_t, o_{t+1}, \dots, o_n | a_1, \dots, a_t, a_{t+1}, \dots a_n),\\  
    p(hist, s_t = s^i) &= \PP(o_1, \dots, o_t, s_t = s^i | a_1, \dots, a_t), \\
    p(test | s_t = s^i) &=  p(o_{t+1}, \dots, o_n | a_{t+1}, \dots, a_n, s_{t+1}=s^i)\text{ \dots and so on.}
\end{align*}
Occasionally, it will be convenient to stack the diagonals of matrices in $\mc Z$ associated with the same action $a$ into $|\mc S| \times |\mc O|$ \textit{row-stochastic matrices} $\Obs^a$, where $\Obs_{i,j}^a = \PP(o_t=o^j | s_t = s^i, a_t=a)$.
To distinguish between the two, we refer to the collection $\mc Z$ as \textit{diagonal observation matrices} and the latter matrices $\{\Obs^a: a \in \mc A\}$ as \textit{row-stochastic observation} matrices.


\vspace{-.1in}
\section{Learning Predictive State Representations}\label{sec:example}
\vspace{-.1in}
\subsection{Forward, Backward, and Hankel Matrices}\label{sec:hankel-construction}

To estimate systems of hidden state, a natural place to start is to form an array that expresses the correlation between the observable random variables.
A \textit{Hankel matrix} is an instance of these arrays that encodes the joint likelihoods of past and future action-observation trajectories. 
In this section, we derive the Hankel matrix given knowledge of the ground truth POMDP.
Our construction starts with two intermediate factors, called the \textit{forward} and \textit{backward} matrix, which we will multiply together to form the Hankel matrix.\footnote{This factorized construction is inspired by the construction of a related matrix, called the \textit{System Dynamics Matrix}, by Singh et al. \cite{singhPredictiveState2004}. The entries of the System Dynamics Matrix (SDM) are the likelihood of a given test \textit{conditioned} on a history. Each row of the Hankel matrix is the same as the SDM except scaled by a constant, the likelihood of the history that indexes the row \cite{baconLearningPlanning2015}.}

The forward matrix $\forw$ is an infinite-by-$S$ matrix that expresses the joint likelihood of observing a history and the current state.
After choosing a suitable ordering to map histories to indices (histories of length one, then histories of length two, and so on), we may write $\forw_{hist, i} = p(hist, s_{t+1}=s^i)$ for a given history $\hist = (a_1, o_1, \dots, a_t, o_t)$.
In terms of the POMDP matrices $\mc T$ and $\mc Z$, a row of the forward matrix can be expressed by right-multiplying the corresponding matrices in the order of the history:
\begin{equation}\label{eqn:forward-derivation}
    \forw_{\hist, :} = b_0 \cdot O^{a_1,o_1} T^{a_1} \cdots O^{a_t, o_t} T^{a_t}.
\end{equation}
The backward matrix $\back$ is an $S$-by-infinite matrix that represents the \textit{conditional} likelihood of obtaining the observations of a test after executing its actions, e.g. $\back_{i, \ttest} = p(test| s_{t+1}=s^i)$. If the $\ttest = (a_{t+1}, o_{t+1}, \dots, a_n, o_n)$, then a column of $\back$ can also be computed similarly to the rows of $\forw$:
\begin{equation}\label{eqn:back-derivation}
    \back_{:, \ttest} = O^{a_{t+1}, o_{t+1}} T^{a_{t+1}} \cdots O^{a_n, o_n} T^{a_n} \cdot \ones,
\end{equation}
where $\ones$ is a vector of one in all entries.

The product of the forward and backward matrices results in the Hankel matrix, which we denote as $\HH$.
The matrix multiplication unconditions and then marginalizes out the intermediate hidden state.
Given a history $\hist = (a_1, o_1, \dots, a_t, o_t)$ and a test $\ttest = (a_{t+1}, o_{t+1}, \dots, a_n, o_n)$, a Hankel matrix entry is the corresponding joint likelihood of receiving a full string of observations conditioned on taking a full string of actions, e.g.
\begin{equation}\label{eqn:hankel-entry}
   \HH_{\hist, \ttest} = \PP(o_1, \dots, o_n | a_1, \dots a_n).
\end{equation}

The Hankel matrix does not refer to the underlying hidden state of the POMDP and can be estimated from action-observation trajectories.
If we had a long string of actions and observations $\mc D_n = (a_1, o_1, \dots, a_n, o_n)$, the matrix $\HH$ could be estimated by the \textit{suffix-history approach}, taking frequency counts of subsequences of increasing lengths \cite{wolfeLearningPredictive2005,bootsClosingLearningplanning2011}:
\begin{equation}\label{eqn:emp-hankel-est}
    \HHhat_{\hist, \ttest} = \frac{
        \sum_{i=1}^{n - L} \mb I_{(a_i,o_i,\dots,a_{i+L},o_{i+L}) = \hist \oplus \ttest}
    }{
        \sum_{i=1}^{n - L} \mb I_{(a_i,\dots,a_{i+L},) = \acts{(\hist \oplus \ttest)}}
    }
\end{equation}
 where $\acts(\hist \oplus \ttest)$ is the action sequence associated with $\hist \oplus \ttest$, $\oplus$ is the concatenation operator, and $L = |\hist \oplus \ttest| < n$.

 It is important to note that expressing the Hankel matrix as a factorization of $\forw$ and $\back$ represents the system under a memoryless policy where future actions are independent of previous observations.
To correctly estimate the matrix via \cref{eqn:emp-hankel-est}, the data must also be collected under a memoryless policy.
Using a uniformly random exploration policy admits this condition, but a larger class of non-memoryless exploration policies can be used for importance sampling~\cite{bowlingLearningPredictive2006}.

\vspace{-.1in}
\subsection{Assumptions}\label{sec:assumptions}
\vspace{-.1in}


Before we discuss our algorithm, we must state some key assumptions.
First, we assume that under a memoryless random exploration-policy $a \sim \pi_{\text{exp}}(\mc A), \pi \in \Delta(\mc A)$\footnote{$\Delta(\mc A)$ denotes the set of distributions over the discrete set $\mc A$.} (which, in this paper, we take to be uniform), the induced Markov chain  $(s_t, a_t, o_t)_{t \geq 0}$ is ergodic. 
This property causes the visitation distribution over states to converge to a stationary distribution $b_\pi$, with nonzero support over the state space, as the agent explores. 
Ergodic Markov chains are \textit{irreducible}, i.e., every state is reachable with nonzero probability, and \textit{aperiodic}, i.e., cannot be trapped in periodic cycles.
We believe these conditions are reasonable. 
If an agent becomes trapped in some connected component of the system, then the dynamics of that component are the ones that are learned. 
Robots also often have passive observation that do not change the environment state. 
The transition dynamics of these actions are loops with period one, which breaks periodicity.

Second, we also assume that $\forw$, when limited to indices corresponding to one fewer than the maximum sequence length, has the same rank as the number of states (e.g. is full-rank), and that $\back$ is also full-rank.
This assumption is strictly weaker than prior work that assumes that all transition matrices $T^a$ and row-stochastic observation matrices $\Obs^a$ are full rank for all actions \cite{azizzadenesheliReinforcementLearning2016,guoPACRL2016}.
The implication can be proven by showing that the products $\diag(b_\pi)T^a \Obs^a \diag(b_\pi)^{-1}$ and $T^a \Obs^a$ are submatrices of $\forw$ and $\back$, respectively.
If $b_\pi$ is a stationary distribution, as implied by our ergodicity assumption, then every entry is positive, so $\forw$ and $\back$ are full-rank.
Many of the standard POMDPs in the literature, which are also included in our experiments (Sec. \ref{sec:experiments}), are counterexamples for the converse of the implication.


Our assumptions have a few consequences on the estimated Hankel matrix.
The starting state distribution $b_0$ at the start of the learning problem, so $b_0$ has little influence over the Hankel matrix. 
Instead, the Hankel matrix will take on the stationary distribution $b_\pi$ as the initial distribution instead.
Furthermore, the rank of the resulting Hankel matrix will be \textit{equivalent} to the number of states of the POMDPs in the restricted class that adhere to our assumptions, as opposed to the lower bound as is the case for general POMDPs \cite{singhPredictiveState2004,huangMinimalRealization2016}. 
A proof of these claims is in Appendix \ref{sec:proof-assumption-consequences} of the supplemental material.




\vspace{-.1in}
\subsection{Constructing a Linear Predictive State Representation}\label{sec:psrs}
\vspace{-.1in}

Suppose we have a Hankel matrix $\HH$, estimated in the limit of infinite data.
Since an implication of Lemma \ref{lem:assumptions-consequences} is that the rank of Hankel matrix $\HH$ is equivalent to the number of states of the POMDP, a natural first step of our method (and learning a PSR) is to compute a rank factorization of $\HH$ \cite{bootsClosingLearningplanning2011,balleSpectralLearning2014}.
One way to achieve this factorization is to compute a singular-value decomposition of the Hankel matrix $\HH = U \Sigma V^T$, where singular values under a specified threshold (and their corresponding orthogonal vector components) are dropped.
The truncated SVD is converted into a \textit{rank factorization} by computing $A = U \Sigma$ to be the left factor and $V^T$ to be the right factor.
Crucially, since $A \cdot V^T$ and $\forw \cdot \back$ both form rank factorizations of $\HH$ (according to assumptions in Sec \ref{sec:assumptions}), there must exist some invertible transformation $P$ such that $A = \forw \cdot P$ and $P^{-1} \cdot \back = V^T$ (see Appendix~\ref{sec:appendix-proof-sim-trans} in the supplemental material).

Moving one step earlier in the Hankel construction (Sec. \ref{sec:hankel-construction}), we can relate transitions, observations, and initial distributions  with the rank factors and the Hankel matrix using \cref{eqn:forward-derivation,eqn:back-derivation}.
Let $\hists^{ao}$ denote an ordered set of all history indices that end in action-observation pair $ao$, and $\hists^{-ao}$ denote the same set with the same ordering but without the ending pair $ao$.
From \cref{eqn:forward-derivation,eqn:back-derivation}, we observe for each $a \in \mc A, o \in \mc O$:
\begin{align}
    \HH_{\hists^{ao}, :} & = \forw_{\hists^{-ao}, :} \cdot O^{ao} T^a \cdot \back = A_{\hists^{-ao}, :}  (P^{-1} O^{ao}T^a P)  V^T\label{eqn:trans-obs-solve} \\
    \HH_{\varepsilon, :} & = b_0^T \cdot \back = (b_0^T P) \cdot V^T\label{eqn:init-vec-solve}                                                                          \\
    \HH_{:, \varepsilon} & = \forw \cdot \mathbf 1 = A \cdot (P \mathbf 1)\label{eqn:final-vec-solve}
\end{align}
After applying the Moore-Penrose inverse of $A$ and $V^T$ to \crefrange{eqn:trans-obs-solve}{eqn:final-vec-solve}, we obtain the observation-transition product, initial belief, and final summation vector up to a similarity transform.
The transformed initial belief $m_0 = b_0^T P$ from \cref{eqn:init-vec-solve} is called the \textit{initial vector} and the transformed summation vector $m_{\infty} = P \mathbf 1$ from \cref{eqn:final-vec-solve} is called the \textit{final vector}.
The product $M^{ao} = P^{-1} O^{ao} T^a P$ from \cref{eqn:trans-obs-solve} is called a linear PSR update matrix.
Together, this collection of matrices and vectors forms a \textit{linear PSR model} \cite{littmanPredictiveRepresentations2001,bootsClosingLearningplanning2011}.
A PSR can be used to compute the likelihood of observations $o_1, o_2, \dots, o_n$ under actions $a_1, \dots, a_n$ by computing the product $\PP(o_1, \dots, o_n | a_1, \dots, a_n) = m_0^T  M^{a_1o_1} \cdots M^{a_no_n} m_\infty = b_0 P P^{-1} O^{a_1o_1}T^{a_1} \cdot O^{a_no_n}T^{a_n}P^{-1}P\mathbf 1$, with appropriate normalizations for conditional calculations.
With a few more details, the argument sketched above is a proof of a result of Carlyle and Paz \cite{carlyleRealizationsStochastic1971a}.
The original result given by the authors was for probabilistic automata, and later versions were proved for HMMs~\cite{hsuSpectralAlgorithm2012,balleSpectralLearning2014,vidyasagarCompleteRealization2011,huangMinimalRealization2016}.

\begin{proposition}\label{prop:prob-machine-off-by-transform}
    Let $\HH = A V^T$  be a rank factorization of a Hankel matrix $\HH$ with $\rank(\HH) = r$ formed from a POMDP with initial state $b_{\pi}$, transition matrices $\{T^a\}$ and diagonal observation matrices $\{O^{ao}\}$.
    Suppose $m_0, \{M^{ao}, \forall (a, o) \in \mc A \times \mc O\}, m_\infty$ are computed as in \cref{eqn:trans-obs-solve,eqn:init-vec-solve,eqn:final-vec-solve}.
    Then there exists a nonsingular matrix $P \in \mathbb R^{r \times r}$ such that $P^{-1} M^{ao} P = O^{ao}T^a$ for all $a \in \mc A, o \in \mc O$, $m_0^T P = b_\pi$, and $P^{-1} m_\infty = \mathbf 1$.
\end{proposition}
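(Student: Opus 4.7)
My plan is to exploit the essential uniqueness of rank factorizations together with the product form of the Hankel matrix given in Eqs.~\ref{eqn:forward-derivation}--\ref{eqn:back-derivation}. First I would observe that $\HH = \forw \cdot \back$ is itself a rank factorization: the assumptions of Sec.~\ref{sec:assumptions} guarantee that both $\forw$ (restricted to sufficiently long histories) and $\back$ have rank $r$, and combined with $\rank(\HH) = r$ this forces the inner dimension of the $\forw \back$ product to be exactly $r$. The standard linear-algebra fact that any two rank factorizations of a rank-$r$ matrix, say $M = XY = CD$ with $X, C$ having $r$ columns and $Y, D$ having $r$ rows, are related by a unique nonsingular $P$ satisfying $C = XP$ and $D = P^{-1}Y$, then supplies a $P$ with $A = \forw P$ and $V^T = P^{-1} \back$. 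This $P$ will be the claimed similarity transform.

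Next I would substitute this $P$ into Eqs.~\ref{eqn:trans-obs-solve}--\ref{eqn:final-vec-solve} and read off $M^{ao}$, $m_0$, and $m_\infty$ by applying pseudoinverses. For the update matrices, replacing $\forw_{\hists^{-ao}, :}$ by $A_{\hists^{-ao}, :} P^{-1}$ and $\back$ by $P V^T$ in Eq.~\ref{eqn:trans-obs-solve} gives
\[
A_{\hists^{-ao}, :} \, \bigl(P^{-1} O^{ao} T^a P\bigr) \, V^T \;=\; A_{\hists^{-ao}, :} \, M^{ao} \, V^T,
\]
and left-multiplying by $(A_{\hists^{-ao}, :})^+$ and right-multiplying by $(V^T)^+$ isolates $M^{ao} = P^{-1} O^{ao} T^a P$. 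Identical pseudoinverse manipulations on Eqs.~\ref{eqn:init-vec-solve} and~\ref{eqn:final-vec-solve} yield $m_0^T = b_\pi^T P$ and $m_\infty = P \mathbf{1}$.

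The main technical obstacle is justifying that these pseudoinverse compositions actually collapse to the identity, i.e., $(A_{\hists^{-ao}, :})^+ A_{\hists^{-ao}, :} = I_r$ and $V^T (V^T)^+ = I_r$. This requires $A_{\hists^{-ao}, :}$ to have full column rank $r$ and $V^T$ to have full row rank $r$. The latter follows directly from $V^T = P^{-1} \back$ together with the full-rankness of $\back$. For the former, since $A = \forw P$ with $P$ nonsingular, the column rank of $A_{\hists^{-ao}, :}$ equals that of $\forw_{\hists^{-ao}, :}$; the assumption in Sec.~\ref{sec:assumptions} that $\forw$ restricted to indices one fewer than the maximum sequence length is full-rank is precisely what is needed to guarantee this uniformly across every choice of $(a,o)$. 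Finally, I would note that using $b_\pi$ in place of $b_0$ in Eq.~\ref{eqn:init-vec-solve} is consistent with the hypothesis of the proposition and with the earlier observation that, in the ergodic infinite-data limit, the empirical Hankel matrix behaves as if the chain had started from its stationary distribution.
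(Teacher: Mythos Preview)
Your proposal is correct and follows essentially the same route as the paper's proof in Appendix~\ref{sec:appendix-proof-sim-trans}: both hinge on the uniqueness of rank factorizations (the paper constructs $P=\forw^\dagger A$ explicitly via pseudoinverses, you cite the standard fact), then substitute into Eqs.~\ref{eqn:trans-obs-solve}--\ref{eqn:final-vec-solve} and cancel pseudoinverses using the full-column-rank of $A_{\hists^{-ao},:}$ inherited from $\forw_{\hists^{-ao},:}$. You also correctly isolate the one nontrivial step---that the Sec.~\ref{sec:assumptions} assumption on $\forw$ restricted to shorter histories is exactly what guarantees $A_{\hists^{-ao},:}$ is full rank uniformly in $(a,o)$---which is precisely what the paper invokes.
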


\vspace{-.1in}
\section{Computing the Similarity Transform}\label{sec:method}
\vspace{-.1in}
When a reward is part of the observation, the PSR representation $M^{ao}$ can be used to compute policies that maximize the expected reward. However, if we had access to $O^{ao}$ and $T^a$, we could compute policies for any reward function. To recover $O^{ao}$ and $T^a$, we need a way to recover the similarity transform $P$ and apply Prop. \ref{prop:prob-machine-off-by-transform}.
The observation and transition matrices can be computed from products $O^{ao} T^a$ by taking the sums of the rows to form the diagonal of $O^{ao}$ and then normalizing to form $T^a$.
In many problem scenarios, $P$ is not the identity matrix,
since the rank factors computed via SVD are orthogonal matrices, which is not always the case for $\forw$ and $\back$.
Our approach can recover $P$ up to a certain partition of states, which we introduce with an example. 


\subsubsection{A running example.} Consider the POMDP illustrated in \cref{fig:sense-float-reset}, which is modified from the Float-Reset domain introduced by Littman and Sutton \cite{littmanPredictiveRepresentations2001}.
Like the original, the \texttt{float} action transitions the state up and down a line graph and will always emits an observation of \texttt{0}.
The \texttt{reset} action, also identical to the original, deterministically sets the state to the left end of the graph.
This action emits an observation of \texttt{1} if the state is already in the leftmost state and \texttt{0} otherwise.
The observations of the \texttt{sense} action are the same as \texttt{reset}, except each state of the system does not change.
We also augment the system with a reward function; the agent obtains +1 reward for executing any action in the state adjacent to the reset state and zero reward otherwise.
This system is challenging to learn due to its nontrivial state aliasing.
Aside from the two leftmost states (when treating rewards as observations), all other states in this POMDP have the same observation distributions, regardless of the action.

We wish to capture the difficulties of Sense-Float-Reset to discuss the main output of our algorithm in general terms.
For arbitrary POMDPs, we group states that have the same observation distribution to form a \textit{partition} of states.
We call this grouping an \textit{observability partition}.
Of particular importance is the collection of observation distributions that correspond to actions with full-rank transition matrices.

\begin{definition}[Full-Rank Observability Partition]
    Let $\mc S_\Pi \subset 2^{\mc S}$ be a partition of states, such that for any set $S \in S_\Pi$, states $s^i, s^j \in S$ if and only if $\Obs^{a}_{i, :} = \Obs^a_{j, :}$ for all $a \in \Afull$. 
    We call $\mc S_\Pi$ a full-rank observability partition.
\end{definition}

\subsection{Recovery up to a Full-Rank Observability Partition}\label{sec:main-result}

\begin{figure}[t]
    \centering
    \includegraphics[width=\linewidth]{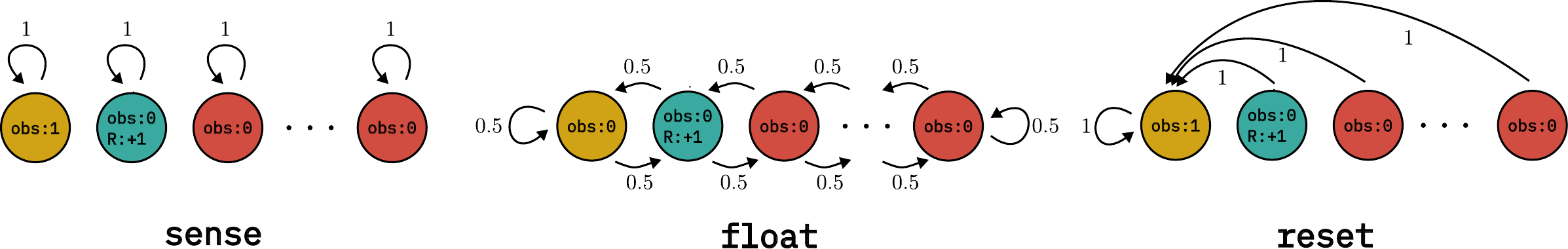}
    \caption{Sense-Float-Reset. Edges are labeled with transition probabilities, and nodes are labeled with observations and reward received upon \textit{leaving} the state. The reward of a state is zero unless specified otherwise. Node shade denotes observability partitions.}\label{fig:sense-float-reset}
    \vspace{-0.50cm}
\end{figure}
Our algorithm can estimate the similarity transform $P$ up to the \textit{full-rank observability partition}, which we formalize in Theorem \ref{thm:coarse-grain-marg-and-trans}.
Our statement is given in the regime of infinite data; for parameters introduced for finite data, see Appendix~\ref{sec:finite-data-params} in the supplemental material.

\begin{theorem}\label{thm:coarse-grain-marg-and-trans}
    Let $\HH$ be a Hankel matrix of POMDP $(\mc S, \mc T, \mc A, \mc O, \mc Z, b_{\pi}, R, \gamma)$ that adheres to the assumptions in Sec. \ref{sec:assumptions}, where $b_\pi$ is the stationary distribution under exploration policy $a_t \sim \Unif(\mc A)$.
    Let $S_{\Pi} \subset 2^{\mc S}$ be the full-rank observability partition of the POMDP.
    Let $m_0$, $\{M^{ao}: a \in \mc A, o \in \mc O\}$, and $m_\infty$ be the linear PSR model as computed via \crefrange{eqn:trans-obs-solve}{eqn:final-vec-solve}.
    Then there exists an algorithm on $m_0$, $\{M^{ao}\}$, and $m_\infty$ that computes a nonsingular matrix $\tilde P$, such that if   
    \begin{align}
         & \tilde b_\pi^T = m_0^T \tilde P = b_\pi P^{-1} \tilde P\label{eqn:part_b_pi}                                                                                                                               \\
         & \tilde O^{ao} \tilde T^a = \tilde P^{-1} M^{ao} \tilde P = \tilde P^{-1} P O^{ao}T^a P^{-1} \tilde P \label{eqn:part_paz}                                                                                          \\
         & \tilde b_{\infty} = \tilde P^{-1} m_\infty = \tilde P^{-1} P \mathbf 1\label{eqn:part_b_inf}           \\          
        \intertext{then}
         & \sum_{s^i \in S} \tilde {b_{\pi}}_{i} = \sum_{s^i \in S} {b_{\pi}}_i\label{eqn:sum_psr_vec_parts}                                                                                                                 \\
         & \sum_{s^i \in S} (\tilde b_\pi^T \tilde O^{a_1,o_1}\tilde T^{a_1}  \dots \tilde O^{a_n,o_n} \tilde T^{a_n})_i = \sum_{s^i \in S} (b_\pi^T  O^{a_1o_1}T^{a_1} \dots O^{a_no_n} T^{a_n})_i\label{eqn:sum_apply_paz} \\
         & \tilde b_{\infty} = \mathbf 1 \label{eqn:obtain_sum_vector}
    \end{align}
    for all $a_1, \dots, a_n \in \mc A$, $o_1, \dots, o_n \in \mc O$, integer $n > 0$ and partition set $S \in S_{\Pi}$.
\end{theorem}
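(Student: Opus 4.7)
The plan is to construct the algorithm from a joint diagonalization of the PSR update matrices restricted to full-rank actions, followed by a rescaling that fixes the remaining basis ambiguity via $m_\infty$. First, for any $a \in \Afull$, observe that $M^a := \sum_{o \in \mc O} M^{ao} = P^{-1}(\sum_o O^{ao})T^a P = P^{-1} T^a P$ is invertible (since $\sum_o O^{ao} = I$ and $T^a$ is full-rank), so the quotients $N^{ao} := M^{ao}(M^a)^{-1} = P^{-1} O^{ao} P$ are well defined and simultaneously diagonalized by $P$, with eigenvalue $(O^{ao})_{ii}$ on the column of $P$ corresponding to state $s^i$. By the very definition of $S_{\Pi}$, two states $s^i, s^j$ lie in the same partition set $S \in S_{\Pi}$ iff $(O^{ao})_{ii} = (O^{ao})_{jj}$ for every $a \in \Afull$, $o \in \mc O$; equivalently, the common eigenspaces of $\{N^{ao}\}$ are indexed exactly by $S_{\Pi}$, each spanned by the columns of $P$ belonging to one part.

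The algorithm then: (i) forms a generic linear combination $N = \sum_{a \in \Afull, o \in \mc O} c_{ao}\, N^{ao}$, whose eigenvalues are distinct across partition sets for almost every choice of coefficients; (ii) computes an eigendecomposition of $N$ and groups the resulting columns by eigenvalue to obtain a preliminary basis $\tilde P_0$; and (iii) rescales the columns of $\tilde P_0$ by the diagonal matrix $\diag(u)$, where $u := \tilde P_0^{-1} m_\infty$, producing $\tilde P$. (Alternatively one can run a tensor-based joint diagonalization in the spirit of \cite{azizzadenesheliReinforcementLearning2016,guoPACRL2016}; the essential point is that only the per-partition eigenspaces are identifiable.) Writing $D := P^{-1}\tilde P$, step (ii) guarantees $D$ is block-diagonal with respect to $S_{\Pi}$, and since $m_\infty = P\mathbf 1$ by Prop. \ref{prop:prob-machine-off-by-transform}, step (iii) is equivalent to enforcing $D\mathbf 1 = \mathbf 1$, i.e., the rows of each block of $D$ sum to one. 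This immediately gives Eq. \ref{eqn:obtain_sum_vector}.

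For Eqs. \ref{eqn:sum_psr_vec_parts} and \ref{eqn:sum_apply_paz}, I would use $m_0^T P = b_\pi^T$ (Prop. \ref{prop:prob-machine-off-by-transform}) to get $\tilde b_\pi^T = m_0^T \tilde P = b_\pi^T D$, and substitute $\tilde O^{ao}\tilde T^a = D^{-1}O^{ao}T^a D$ into Eq. \ref{eqn:sum_apply_paz}; the interior factors telescope as $DD^{-1}$, reducing the product to $b_\pi^T O^{a_1 o_1}T^{a_1}\cdots O^{a_n o_n}T^{a_n} D$. Then $\sum_{s^i \in S}(\cdots)_i = \sum_j (\cdots)_j \sum_{s^i \in S} D_{ji}$; block-diagonality zeros the terms with $s^j \notin S$, and the row-sum-one property makes $\sum_{s^i \in S} D_{ji} = 1$ for $s^j \in S$, yielding exactly the right-hand side of Eq. \ref{eqn:sum_apply_paz}. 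Eq. \ref{eqn:sum_psr_vec_parts} is the $n=0$ special case.

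The principal obstacle is step (ii) in the presence of repeated eigenvalues: each $N^{ao}$ is individually degenerate whenever any partition set has more than one state, so a per-matrix eigendecomposition returns arbitrary within-block bases that need not align across different $(a,o)$. The random-combination trick resolves this because, although each $N^{ao}$ is degenerate, a generic $N = \sum c_{ao}N^{ao}$ has exactly $|S_{\Pi}|$ distinct eigenvalues—one per partition set—so its eigendecomposition directly yields the desired block-diagonal $D$. Making the genericity precise (failure set of Lebesgue measure zero, plus perturbation bounds in the finite-sample regime deferred to the appendix) is the real technical content; once that is in hand, the remaining verifications are the routine algebra sketched above.
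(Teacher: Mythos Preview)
Your strategy is exactly the paper's: form $N^{ao}=M^{ao}(M^a)^{-1}$ for $a\in\Afull$, jointly diagonalize via a random linear combination (the paper's Eq.~\ref{eqn:rand-sim-obs-sum} and Lemma~\ref{lem:unique-obs-unique-eigvals}), obtain a preliminary basis $\tilde P_0$ with $P^{-1}\tilde P_0$ block-diagonal along $S_\Pi$ (Lemma~\ref{lem:nonunique-obs-nonunique-P}), then correct using $m_\infty$. Your reformulation---that the correction forces $D\mathbf 1=\mathbf 1$ with $D$ block-diagonal, so partition sums are preserved---is a clean way to package the block-by-block telescoping the paper carries out in Appendix~\ref{sec:appendix-proof-main-result}.

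There is, however, a real gap in step~(iii). You set $\tilde P=\tilde P_0\,\diag(u)$ with $u=\tilde P_0^{-1}m_\infty=(P^{-1}\tilde P_0)^{-1}\mathbf 1$, but nothing guarantees $u$ has no zero entries, so $\diag(u)$ need not be invertible and $\tilde P$ may fail to be nonsingular. This is \emph{not} handled by your genericity argument on the coefficients $c_{ao}$: those determine the eigen\emph{values} of $N$, but within each repeated eigenspace the basis is chosen by whatever eigendecomposition routine you call, independently of the $c_{ao}$. Concretely, for a two-state partition where the routine returns $v_1=p_1+p_2$ and $v_2=p_1-p_2$ (with $p_1,p_2$ the corresponding columns of $P$), the block of $P^{-1}\tilde P_0$ is $\bigl(\begin{smallmatrix}1&1\\1&-1\end{smallmatrix}\bigr)$ and its inverse applied to $(1,1)^T$ is $(1,0)^T$, so $u$ has a zero entry. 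The paper patches exactly this hole by inserting a random block-diagonal rotation $R$ (blocks aligned with $S_\Pi$) between $\tilde P_0$ and the diagonal rescaling, and then uses Haar-measure properties to argue that $R^T(P^{-1}\tilde P_0)^{-1}\mathbf 1$ is uniform on a sphere of positive radius in each block, hence has all entries nonzero almost surely. With that additional step your argument goes through verbatim.
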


\begin{figure}[t]
    \centering
    \includegraphics[width=\linewidth]{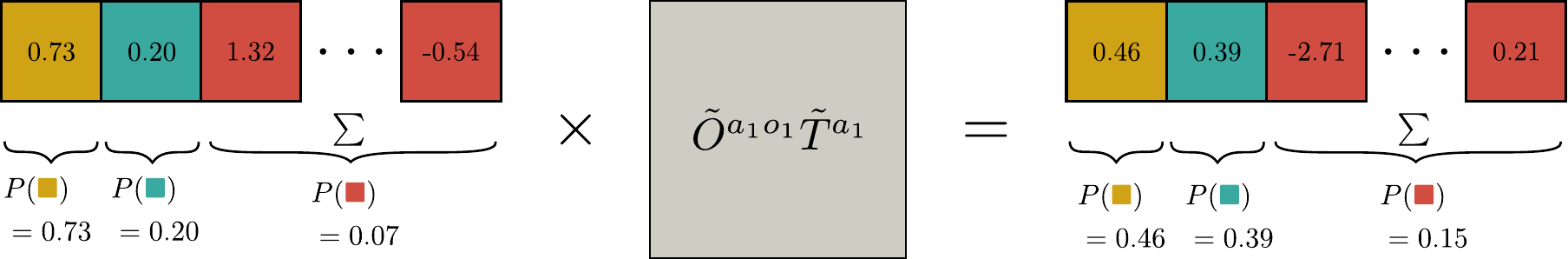}
    \caption{An illustration of Theorem \ref{thm:coarse-grain-marg-and-trans} applied to Sense-Float-Reset. While the individual values inside a single box (state) may not be read as the belief likelihood of the system occupying that state, summing indices over partitions, represented by box shades, will compute the likelihood of the system state in that partition.}\label{fig:partition-calc}
    \vspace{-0.5cm}
\end{figure}

What Theorem \ref{thm:coarse-grain-marg-and-trans} states is that we must sum over indices of the initial `belief vector' to compute the likelihood the system is in a particular partition (\cref{eqn:part_b_pi}).
The same remains true when computing joint likelihoods between observations and the current state partition (\cref{eqn:sum_apply_paz}; see \cref{fig:partition-calc} for a worked example for Sense-Float-Reset).
For POMDPs that have unique observation distributions across all actions, each state is in its own singleton partition, and we can recover the full similarity transform. Otherwise, we recover $P$ up to the full-rank observability partition.
We note it is possible for us to recover some POMDPs that have fewer observations than states, since the collection of \textit{distributions} over emitted observations across all actions must be distinct (see Appendix \ref{sec:appendix-noisy-hallway} in the supplemental material for examples).


To benefit from the result of Theorem \ref{thm:coarse-grain-marg-and-trans}, the systems to be learned must satisfy the assumptions discussed in Sec. \ref{sec:assumptions} and contain full-rank actions.
Full-rank actions commonly arise in many automated manipulation contexts, our domain of interest.
In automated manipulation, robot actions have a desired transition state but may also \textit{fail} (a gripper misses a grasp, slips of a drawer handle, etc.).
One way these actions have been modeled in robot planning systems is to designate a successful `desired state' with some success likelihood $p_{succ}$, and have the system state `fail' with some likelihood (causing a self-transition) \cite{kaelbling_integrated_2013,garrett_online_2020}.
In POMDP terms, these types of actions can be simply modeled as the convex combination $p_{\text{succ}} T + (1 - p_{\text{succ}}) I$, where $T$ is a matrix with rows containing all zeros except for a single entry of $1$ (the `desired states'), the identity $I$ indicates self-loop failure dynamics, and $p_{succ}$ the likelihood of an action succeeding.
Under mild assumptions (e.g. $p_{\text{succ}} \neq 1/2, 1$), these actions are full-rank (see Appendix~\ref{sec:appendix-full-rank-trans} in supplemental material).

\vspace{-.1in}
\subsection{Recovering Observation Distributions from Full-Rank \vspace{-.1in}
Actions}\label{sec:obs-as-eigenvalues}

We now introduce an algorithm that computes the similarity transform $\tilde P$.
Our approach is a reformulation of the tensor decomposition method \cite{anandkumarTensorDecompositions2012,azizzadenesheliReinforcementLearning2016} for linear PSR models.
Our procedure begins by marginalizing out the observations in matrices $M^{ao}$, yielding the transitions $T^a$ up to similarity transform $P$.
This marginalization can be done by summing all matrices $M^{ao}$ over all $o \in \mc O$ for some fixed $a \in \mc A$: 
\begin{equation}\label{eqn:marg-out-obs}
    \sum_{o \in \mc O} M^{ao}
    = P \bigg(\sum_{o \in \mc O} O^{ao} T^a \bigg) P^{-1}
    = P T^a P^{-1}
\end{equation}
With a slight abuse of notation, we denote $P^{-1} T^a P$ as the matrix $M^a$.
The next step of our procedure continues with transitions that are full-rank, which can easily be determined by a threshold test on the singular value decomposition on all matrices $M^a$.
Let $\mc M_{\full} = \{P^{-1} T^a P : a \in \Afull \}$ be the set of full-rank transitions.
Next, we compute the diagonal observation matrices associated with the full-rank actions.
For each $M^a \in M_{\full}$ and $o \in \mc O$ we compute
\begin{equation}\label{eqn:diag-matrix-prep}
    M^{ao} \cdot {M^a}^{-1}
    = P O^{ao} T^a P^{-1} (P T^a P^{-1})^{-1}
    = P O^{ao} P^{-1}.
\end{equation}
Since we know that all matrices $O^{ao}$ are diagonal, the eigenvalues of the matrices $M^{ao}{M^a}^{-1}$ will be the diagonal entries of $O^{ao}$.
If the entries of a particular $O^{ao}$ are unique, then the eigenvectors computed from an eigendecomposition of $M^{ao} {M^a}^{-1}$ will recover the columns of $P$ up to a scalar factor.
However, it is common to have repeated observation likelihoods across states for a single action (like all of Sense-Float-Reset), and an eigendecomposition may produce \textit{any} spanning set of the invariant space corresponding to the repeated eigenvalue.


To reduce ambiguity, we wish to compute a \textit{joint diagonalization} of all matrices $M^{ao} {M^a}^{-1}$, which attempts to diagonalize each matrix with the same similarity transform.
We apply a method of He et al. \cite{heRandomizedMethods2024}.
Their method exploits the fact that \textit{sums} of matrices $\{\simobs : a \in \Afull, o \in \mc O\}$ do not change the invariant spaces spanned by eigenvectors of each matrix $\simobs$.
Suppose $\{w^{ao}: a \in \mc \Afull, o \in \mc O\}$ is a set of weights, then the weighted sum 
\begin{equation}\label{eqn:rand-sim-obs-sum}
    \sum_{a \in \Afull, o \in \mc O} w^{ao} \simobs
    = P \bigg(\sum_{a \in \Afull, o \in \mc O} w^{ao} O^{ao}\bigg) P^{-1}
\end{equation}
is still diagonalizable by $P$.
Should we choose \textit{random} weights $w_{ao}$, then the eigenvalues will be distinct up to states that share the same observation distribution almost surely.
We sample these weights from the unit sphere $\mb S^{|\Afull|\cdot|\mc O| -1}$~\cite{heRandomizedMethods2024}.
\begin{lemma}\label{lem:unique-obs-unique-eigvals}
    Let weights $\{w_{ao}: a \in \Afull, o \in \mc O\}$ be sampled i.i.d. with respect to $\Unif(\mb S^{|\Afull|\cdot|\mc O| - 1})$ and $\Lambda = \sum_{a \in \Afull, o \in \mc O} w^{ao} O^{ao}$.
    Then $\Lambda_{ii} = \Lambda_{jj}$ with prob. 1 if and only if $O^{ao}_{ii} = O^{ao}_{jj}$ for all $o \in \mc O$ and all $a \in \Afull$.
\end{lemma}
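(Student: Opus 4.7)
\medskip

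\noindent\textbf{Proof plan for Lemma \ref{lem:unique-obs-unique-eigvals}.}
My approach is to observe that since every $O^{ao}$ is diagonal, the random matrix $\Lambda$ is itself diagonal with entries that are linear functionals of the weight vector $w \in \mb S^{|\Afull|\cdot|\mc O|-1}$. The lemma then reduces to showing that two such linear functionals agree almost surely exactly when they are identical as functionals, which is a standard zero-set-of-a-linear-form argument on the sphere.

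First I would write out coordinates: since each $O^{ao}$ is diagonal, $\Lambda$ is diagonal too, and
\begin{equation*}
\Lambda_{ii} \;=\; \sum_{a \in \Afull,\, o \in \mc O} w^{ao}\, O^{ao}_{ii}, \qquad \Lambda_{jj} \;=\; \sum_{a \in \Afull,\, o \in \mc O} w^{ao}\, O^{ao}_{jj}.
\end{equation*}
Define the coefficient vector $c \in \R^{|\Afull|\cdot|\mc O|}$ with entries $c_{ao} = O^{ao}_{ii} - O^{ao}_{jj}$, so that $\Lambda_{ii} - \Lambda_{jj} = \langle w, c\rangle$.

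The \emph{if} direction is immediate: if $O^{ao}_{ii} = O^{ao}_{jj}$ for every $(a,o) \in \Afull \times \mc O$, then $c = 0$, so $\Lambda_{ii} - \Lambda_{jj} = 0$ deterministically and hence with probability one, regardless of the sampled weights.

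For the \emph{only if} direction I would argue by contrapositive. Suppose there is some $(a^*, o^*) \in \Afull \times \mc O$ with $O^{a^*o^*}_{ii} \neq O^{a^*o^*}_{jj}$. Then $c \neq 0$, so $\{w \in \R^{|\Afull|\cdot|\mc O|} : \langle w, c\rangle = 0\}$ is a proper linear hyperplane through the origin. Its intersection with the unit sphere $\mb S^{|\Afull|\cdot|\mc O|-1}$ is a great subsphere of codimension one, which is a set of surface measure zero under the rotationally invariant uniform distribution on the sphere. Hence $\PP(\Lambda_{ii} = \Lambda_{jj}) = \PP(\langle w, c\rangle = 0) = 0$, i.e., $\Lambda_{ii} \neq \Lambda_{jj}$ almost surely. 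Combining with the \emph{if} direction gives the stated equivalence.

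The only step that requires any care is the measure-zero claim, since one must justify that a proper linear hyperplane meets $\mb S^{d-1}$ in a null set with respect to the uniform (surface) measure; this follows because the uniform measure is absolutely continuous with respect to the $(d-1)$-dimensional Hausdorff measure on the sphere, while a great subsphere has Hausdorff dimension $d-2$. Everything else is a direct coordinate computation enabled by diagonality of the $O^{ao}$.
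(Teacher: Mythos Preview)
Your proposal is correct and follows essentially the same argument as the paper: both reduce the claim to showing that $\langle w, c\rangle = 0$ for a nonzero difference vector $c$ (the paper calls it $a-b$) defines a proper hyperplane whose intersection with the sphere is a lower-dimensional subsphere of zero surface measure. Your labeling of the ``if'' and ``only if'' directions is in fact cleaner than the paper's (which appears to have swapped them), but the substance is identical.
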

When multiple states have the same observation distribution for all actions, the eigenvalues corresponding to those states will be the same, so their eigenvectors cannot be uniquely determined.
Thus, the similarity transform $P'$ is nonunique when we have a nontrivial full-rank observability partition, the consequences of which we discuss in the next session.

\vspace{-.1in}
\subsection{Recovering Partition-Level Belief Likelihoods and Transitions}\label{sec:trans-partitions}
\vspace{-.1in}

The recovered similarity transform $P'$ formed by the eigenvectors of the random sum in \cref{eqn:rand-sim-obs-sum}, but not the partition-level transitions.
When the full-rank observability partition is nontrivial, the matrix $Q = P^{-1} P'$ is block-diagonal, with invertible blocks that correspond to states within the same partition (see Appendix \ref{sec:proof-recovery-up-to-block-diag} in supplemental material for a proof). 
This matrix $Q$ prevents us from using $P'$ as the similarity transform promised in Theorem \ref{thm:coarse-grain-marg-and-trans}.
For example, when applying $P'$ as a similarity transform to the PSR vector $m_0$, a restriction to the subindices of the partition $S_1$ yields $[m_0 P']_{S_1} = [b_0^T P^{-1} P']_{S_1} = [b_0^T]_{S_1} Q_1$, so the sum of the entries is not a proper likelihood, violating \cref{eqn:sum_psr_vec_parts} of Theorem~\ref{thm:coarse-grain-marg-and-trans}.

To recover partition-level likelihoods and transitions, we look to the final vector of the linear PSR after applying the transform $P'$, e.g. $P m_0 = P'^{-1} P \mathbf{1}$.
Intuitively, by applying $\diag(P m_0)$ as a similarity transform, we transform the final vector back to $\mathbf{1}$, recapturing a marginalization of the latent state variable.
To avoid scenarios where $P'^{-1} m_0$ has entries of zero, we perform a pre-processing step by multiplying the system with a random block-diagonal rotation matrix $R$, whose blocks correspond to the full-rank observability partition. 
We take the transform $\diag(R P'^{-1} m_\infty) R P'^{-1}$  as the similarity transform $\tilde P$ that satisfies Theorem \ref{thm:coarse-grain-marg-and-trans} (see Appendix \ref{sec:appendix-proof-main-result} for the proof of correctness and runtime).

\bigskip
\begin{figure}[t]
    \centering
    \includegraphics[width=\linewidth]{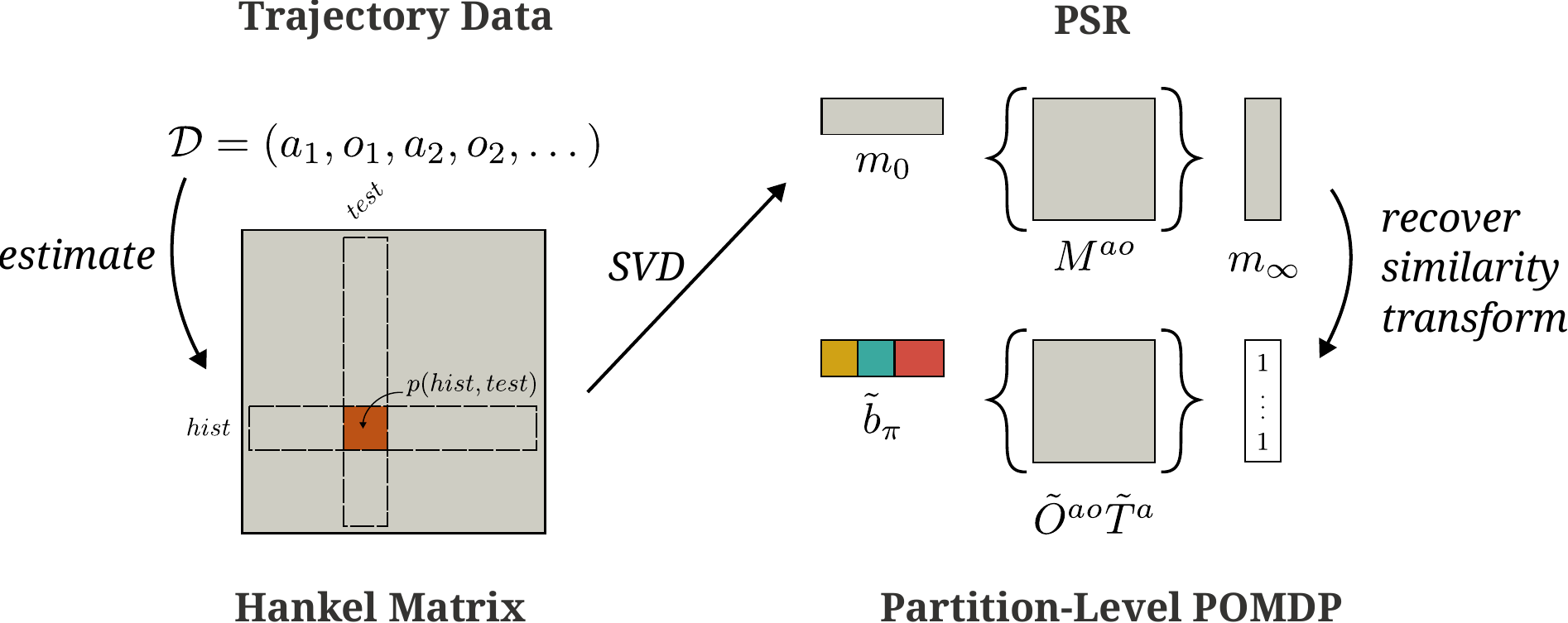}
    \caption{An illustration of the entire algorithm pipeline described from Sec. \ref{sec:hankel-construction}-\ref{sec:method}.}\label{fig:pipeline}
    \vspace{-0.50cm}
\end{figure}
\noindent\textbf{Summary}
A diagram that sketches the entire learning algorithm can be found in \cref{fig:pipeline}.
Prop. \ref{prop:prob-machine-off-by-transform} and Theorem \ref{thm:coarse-grain-marg-and-trans} suggest that we can compute a PSR and then recover the corresponding system model by recovering the unknown similarity transform $P$. 
Sections \ref{sec:obs-as-eigenvalues} and \ref{sec:trans-partitions} present an algorithm that applies to all POMDPs that admit the assumptions of Sec. \ref{sec:assumptions}, and computes $P$ up to the full-rank observability partition of states. 
The set of POMDPs learned by our method captures a large class of real-world POMDPs that arise in manipulation scenarios.

\vspace{-.1in}
\section{Experiments}\label{sec:experiments}
\vspace{-.1in}
\scomment{
    In particular, we seek to answer the questions:

    \begin{enumerate}
        \item Can the transition and observations learned by our method with a finite amount of data be exploited by planning algorithms that require more general inference operations?
        \item Does our `correction' of a PSR reduce the performance of the learned model?
        \item How much data is required to learn a model that yields comparable planning performance to the ground truth?
    \end{enumerate}
}

Our experiments evaluate the fidelity of the learned POMDP models and explore the advantages of estimating transition and observation likelihoods.
We seek to know, empirically, how quickly the learned partition-level transition and observation matrices converge to ground truth values. 
We also wish to know how the performance of the planning model is impaired by estimation error from little data.
Lastly, we evaluate whether the transition and observation likelihood estimates can be leveraged to specify a reward function to elicit desired behavior from a planner.
All experiments are compared against linear PSRs and Expectation-Maximization (EM) \cite{rabinerTutorialHidden1989,shatkayLearningGeometricallyConstrained2002} with a number of states determined by the number of components of the truncated SVD when learning a linear PSR.






\begin{figure}[t]
    \centering
    \includegraphics[width=0.7\linewidth]{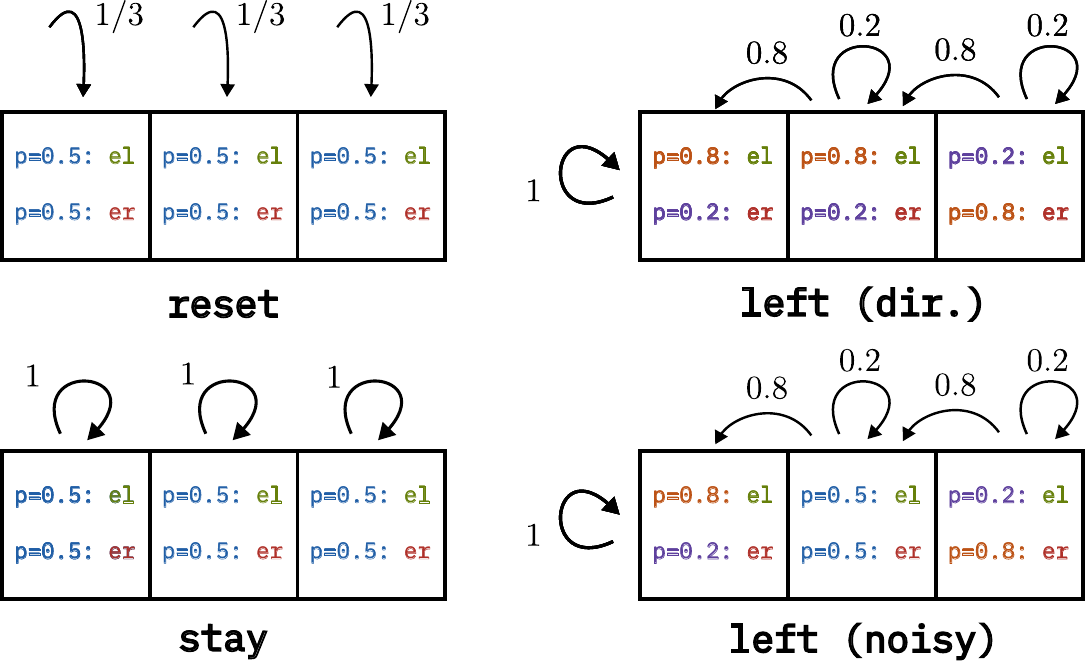}
    \caption{The directional and noisy hallway domains. The observations \ttt{el} and \ttt{er} represent `end-right' and `end-left' respectively. 
    Transitions are shared and observation distsributions differ in the middle state in the \ttt{left} and \ttt{right} actions.
    The \ttt{right} action is the mirror image of \ttt{left}, with the obs. likelihoods swapped in the middle state in the directional environment.}\label{fig:all-hallways}
    \vspace{-0.5cm}
\end{figure}

For our planning experiments, we verify our approach on two standard domains, Tiger \cite{kaelbling_planning_1998} and T-Maze (with a single corridor state)~\cite{bakkerReinforcementLearning2001}, and Sense-Float-Reset.
To allow the agent to collect an arbitrary-length string of data in all domains, we modify T-Maze to choose the next state randomly from the initial state distribution instead of terminating the sequence of interactions. 
In the supplemental material, Appendices \ref{sec:finite-data-params} and \ref{sec:appendix-exps} contain details on the parameters of the learning algorithm and planner, including sensitivity analyses to algorithm parameters. 
Rewards of the original POMDPs have been learned as observations for planning.
For our reward-specification experiments, we introduce two novel domains (\textit{noisy hallway} and \textit{directional hallway}) whose observation and transition matrices can be fully recovered by our method (see \cref{fig:all-hallways}).
For more details on all domains, see Appendix \ref{sec:appendix-noisy-hallway} in the supplemental material.

\begin{figure}[t]
    \centering
    \includegraphics[width=\linewidth]{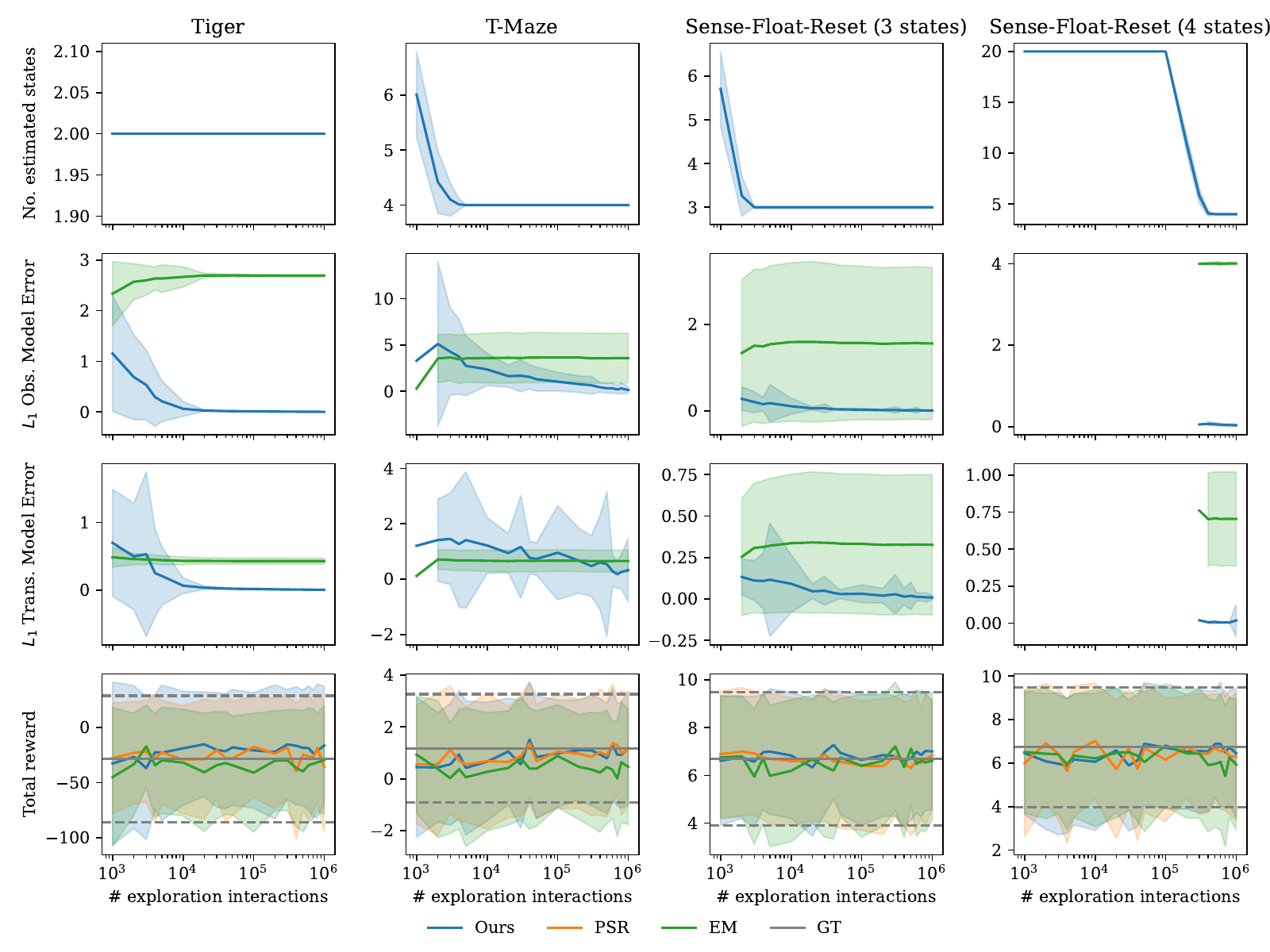}
    \vspace{-0.75cm}
    \caption{
    Error bars represent standard deviation over 100 seeds. The y-axis is scaled to make convergence visible.
    \textbf{Row 1:} Estimated number of states.
    \textbf{Row 2:} Obs. matrix error relative to ground truth. 
   \textbf{Row 3:} Trans. matrix error. 
    This error is only measurable once the estimated number of states matches that of ground truth,
    which truncates the curves. 
    \textbf{Row 4:} Total reward from planner under different models.}
    \label{fig:all-plots}
    \vspace{-0.50cm}
\end{figure}
\subsubsection{Convergence to true POMDP parameters.} In \cref{fig:all-plots}, our results suggest that our method successfully recovers the underlying observation models through the $L_1$ error of learned observation and partition-level transition likelihoods against ground truth.
EM consistently converges to a local minimum that correctly predicts future observations but obtains incorrect POMDP matrices.


\subsubsection{Planning performance with the learned model.}
To evaluate the performance of the learned models, we apply a standard solver to the original ground truth POMDPs, learned PSRs, and learned POMDPs.
We use the sampling-based planning approach PO-UCT of Silver and Veness \cite{silver_montecarlo_2010} with the correction described by Shah et al. \cite{shahNonasymptoticAnalysis2022}.
Ideally, average planning performance should be the same across ground truth models, PSRs, and the partition-level POMDPs, since they all learn the same distribution over observations and rewards given a potential sequence of actions
(see Appendix \ref{sec:appendix-sampling-strategies} for rollout strategies for each model).
\Cref{fig:all-plots} reports that the performance across models is roughly the same.

\subsubsection{Planning performance on specified rewards.}
We explore whether the likelihoods and observations yielded by our algorithm can be leveraged to direct agent behavior by specifying a reward function.
When explicit POMDPs are available, we can analyze the learned observation matrices to directly find the states to assign positive reward.
In the past, if a PSR did not learn a reward model, then rewards were determined by observations \cite{bootsClosingLearningplanning2011}.
Otherwise, the entire model must be relearned to estimate a reward model that depends on state \cite{izadiPointBasedPlanning2008}.

\begin{figure}[t]
    \centering
    \includegraphics[width=\linewidth]{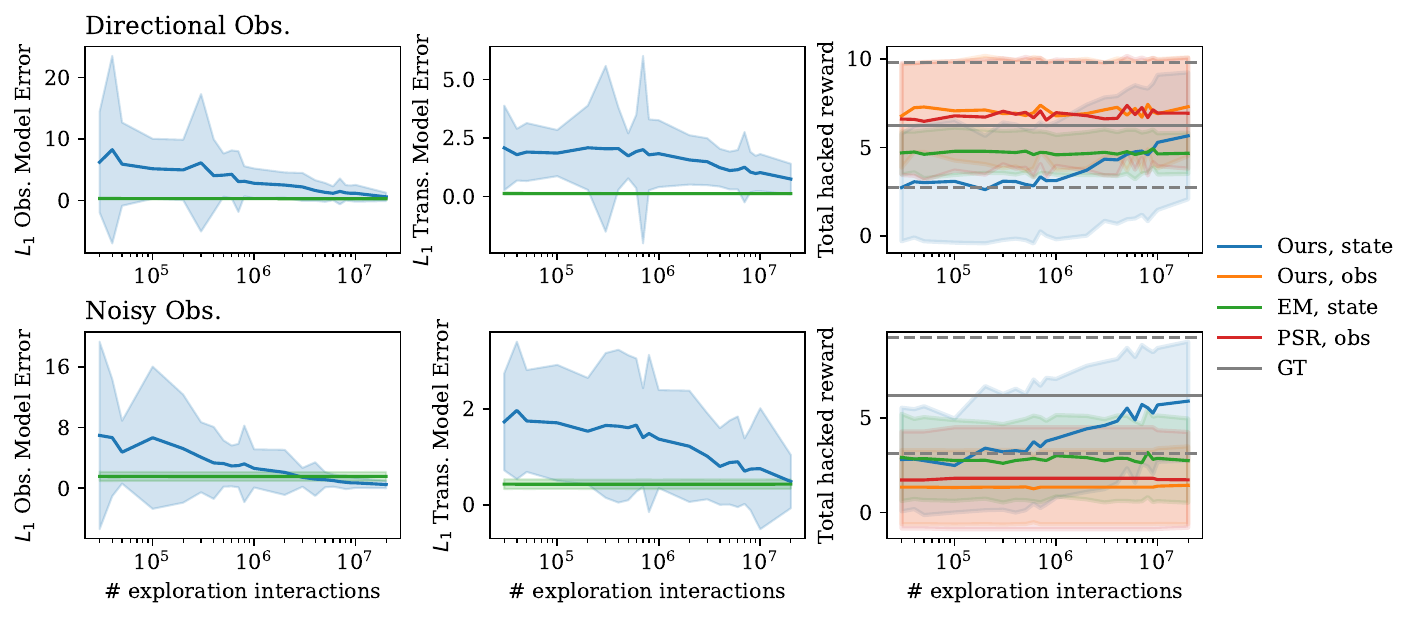}
    \caption{The agent receives +1 reward for each timestep in the designated goal state (middle of hallway). `Obs' refers to assigning rewards to action-observation pairs, whereas `state' refers to assigning rewards to states. Error bars report standard deviation over 100 seeds. `Hacked reward' refers to planning performance under a new reward fun.}
    \label{fig:swap-reward}
    \vspace{-0.50cm}
\end{figure}

Our evaluations of this experiment are carried out on the two noisy hallway domains, where we attempt to direct the agent to drive the POMDP to the `middle' hallway state with ambiguous observations (say, to collect more data for a poorly functioning perception model).
We compare the strategies of assigning rewards to observations and assigning rewards to states.
In the directional domain, we assign $+1$ reward to action-observation pairs $(\ttt{left}, \ttt{end-left})$ and $(\ttt{right}, \ttt{end-right})$ for the former strategy, and assign $+1$ reward to the state whose maximum likelihood observations under \texttt{left} and \texttt{right} are their corresponding hallway ends for the latter.
For the noisy environment, we reward the same action-observation pairs as the directional environment and also $(\ttt{left}, \ttt{end-right})$ and $(\ttt{right}, \ttt{end-left})$ for the former strategy, and add $+1$ reward to the state that maximizes the sum of entropy of observation distributions across all actions for the latter. 
The former strategy is evaluated on PSRs and POMDPs, whereas the latter is evaluated on learned POMDP models only.
Performance is judged on total reward gathered under the new reward function.

Results can be found in \cref{fig:swap-reward}.
In the directional domain, models that use the first strategy allow the planner to drive the middle state because it is easily identified by the observations received under \ttt{left} and \ttt{right}. 
The second strategy performs poorly with less data due to slow convergence of transition matrices (see Appendix \ref{sec:appendix-transition-convergence}).
In the noisy domain, the uniform belief state and belief state that places all mass on the middle of the hallway yield the same observation mixture distribution when weighted by the belief over states.
The planner that uses observation-based rewards sees that playing the \ttt{reset} action or controlling the state to stay in the center to yield the same rewards, leading to unnecessary \ttt{reset} actions.
The planner that uses the rewards emitted from the highest-entropy state correctly rewards the middle state after the transition matrices begin to converge, eliciting the correct behavior.
This additional flexibility highlights that learning POMDPs maintains all the advantages of PSRs and obtains the flexibility to exploit observation and transition likelihood models.


\vspace{-.1in}
\section{On the Necessity of Observability 
\vspace{-.1in}
Partitions}\label{sec:info_impossibility}
While learning POMDPs up to observability partitions falls short of our goal of recovering all possible POMDPs with arbitrary transition and observation matrices, the restriction of the observability partitions is in fact the best we can achieve from a single sequential trajectory.
We will show that there are multiple POMDPs with different dynamics within the observability partitions that yield the same distribution over observations regardless of the actions taken, even when we assume that $\forw$ and $\back$ are full-rank.
The principal issue is that there are multiple valid `forward-backward' factorizations of the same Hankel matrix.
An equivalent statement is that there is a similarity transform $P$ that converts one valid forward-backward factorization into another.
Our counterexample to identifiability is framed in terms of the last point: we provide a POMDP and a similarity transformation that, when applied in the manner of Prop. \ref{prop:prob-machine-off-by-transform} $P$, yields a valid $b_0'$, $O'^{ao}$ and $T'^a$ of a different POMDP.

The counterexample is a perturbation of the \textit{sense-float-reset} with three states (\cref{fig:sense-float-reset}).
We perturb the \ttt{float} transition matrix, which we associate with a new action $\ttt{float}'$, to allow for the states to jump between the two ends of the graph.
The \ttt{reset}, \ttt{sense}, and observation matrices remain unchanged.

\begin{equation}\label{eqn:counterexample}
    T^{\ttt{float}'} = \begin{pmatrix}
        .4 & .5 & .1 \\
        .5 & .1 & .4 \\
        0  & .5 & .5 \\
    \end{pmatrix}
    \quad
    P =  \begin{pmatrix}
        1 & 0   & 0   \\
        0 & 0   & 1   \\
        0 & .95 & .05 \\
    \end{pmatrix}
\end{equation}
It can be readily verified that after applying the matrix $P$ as a similarity transform on all POMDP matrices of the perturbed sense-float-reset problem, that we obtain another valid POMDP.
Under a random exploration policy, the Hankel matrices computed from data trajectories of both POMDPs are the same, in the limit of infinite data (see Appendix \ref{sec:proof_unid} in the supplemental material for proof).
\begin{theorem}\label{thm:same_hankel}
    Let $\mc M = (b_0, \mc A, \mc T, \mc O, \mc Z)$ and $\mc M' = (b_0', \mc A', \mc T', \mc O', \mc Z')$ be the two POMDPs corresponding to the POMDPs as presented above, with arbitrary initial distributions $b_0$ and $b_0'$.
    Suppose the Hankel matrix $\HH = \lim_{n \to \infty} \hat \HH$, where $\hat \HH$ is computed via \cref{eqn:emp-hankel-est} under POMDP $\mc M$, and suppose $\HH'$ is the analogous Hankel matrix under $\mc M'$.
    Then $\HH$ and $\HH'$ are equivalent.
\end{theorem}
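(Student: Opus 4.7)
The plan is to reduce $\HH = \HH'$ to a telescoping identity driven by the similarity transform $P$ of Eq. \ref{eqn:counterexample}. I would first invoke the consequence of our ergodicity assumption noted in Sec. \ref{sec:assumptions}: in the infinite-data limit of the empirical estimator in Eq. \ref{eqn:emp-hankel-est}, the effective initial distribution is the stationary distribution $b_\pi$ of the Markov chain induced by the uniform exploration policy, not the starting distribution $b_0$. Consequently each Hankel entry takes the forward-backward form
\begin{equation*}
    \HH_{\hist, \ttest} = b_\pi^T \, O^{a_1 o_1} T^{a_1} \cdots O^{a_n o_n} T^{a_n} \, \mathbf{1},
\end{equation*}
and similarly for $\HH'$ with the primed matrices and stationary distribution $b'_\pi$. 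The arbitrariness of $b_0, b_0'$ in the theorem statement is therefore immaterial, and it suffices to check this factorized identity entrywise.

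I would then verify three algebraic properties of $P$. (i) $P$ is invertible: a direct $3\times 3$ determinant computation gives $\det P = -0.95$. (ii) $P \mathbf{1} = \mathbf{1}$ by inspection of the row sums, hence also $P^{-1} \mathbf{1} = \mathbf{1}$, so the similarity transform $T \mapsto P T P^{-1}$ preserves row-stochasticity. (iii) For every $(a, o)$, the product $P \, O^{ao} T^a \, P^{-1}$ factors as (diagonal) $\cdot$ (row-stochastic) with non-negative entries, and these factors are exactly $O'^{ao}$ and $T'^a$. The block structure of $P$ is critical here: $P$ is block-diagonal with a $1 \times 1$ identity on the singleton $\{s^1\}$ and an invertible $2 \times 2$ block on the aliased pair $\{s^2, s^3\}$. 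Because every $O^{ao}$ has a constant diagonal value within $\{s^2, s^3\}$ (these two states have identical observation distributions under all actions), $P$ commutes with each $O^{ao}$, so $O'^{ao} = P O^{ao} P^{-1} = O^{ao}$: the observation matrices of $\mc M'$ coincide with those of $\mc M$. Only $T'^{\ttt{float}'} = P T^{\ttt{float}'} P^{-1}$ differs from $T^{\ttt{float}'}$, and a short direct computation confirms its non-negativity.

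Next I would identify the stationary distributions. Since $\bar T' := \frac{1}{|\mc A|} \sum_{a} T'^a = P \bar T P^{-1}$, right-multiplying the stationarity relation $b_\pi^T \bar T = b_\pi^T$ by $P^{-1}$ gives $(b_\pi^T P^{-1}) \bar T' = b_\pi^T P^{-1}$. Ergodicity of $\mc M'$ is inherited from $\mc M$ once the non-negativity checks in (iii) are in place, so uniqueness of the stationary distribution forces $(b'_\pi)^T = b_\pi^T P^{-1}$. The telescoping cancellation then closes the argument:
\begin{align*}
    \HH'_{\hist, \ttest}
    &= (b'_\pi)^T \prod_{i=1}^n \bigl(O'^{a_i o_i} T'^{a_i}\bigr) \, \mathbf{1}
    = b_\pi^T P^{-1} \cdot \prod_{i=1}^n \bigl(P\, O^{a_i o_i} T^{a_i} P^{-1}\bigr) \cdot \mathbf{1} \\
    &= b_\pi^T \prod_{i=1}^n \bigl(O^{a_i o_i} T^{a_i}\bigr) \cdot P^{-1} \mathbf{1}
    = \HH_{\hist, \ttest}.
\end{align*}

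The main obstacle is property (iii): verifying for the specific $P$ in Eq. \ref{eqn:counterexample} that the transform $P \, O^{ao} T^a \, P^{-1}$ really does split as a diagonal observation matrix times a row-stochastic transition matrix with no negative entries, for each of the three actions $\ttt{float}', \ttt{reset}, \ttt{sense}$. This is a finite case analysis, but it is what gives the counterexample its force: the block-diagonal alignment of $P$ with the full-rank observability partition $\{\{s^1\}, \{s^2, s^3\}\}$ is precisely what allows the perturbation to remain a valid POMDP while reshuffling mass between the aliased states.
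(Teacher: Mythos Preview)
Your proposal is correct and follows essentially the same route as the paper's own proof: both reduce to showing that the stationary distribution of $\mc M'$ under uniform exploration is $b_\pi^T P^{-1}$, then use $P\mathbf 1 = \mathbf 1$ to telescope the product $b_\pi^T \prod_i (O^{a_io_i}T^{a_i}) \mathbf 1$ into the primed version. Your added observation that $P$ commutes with every $O^{ao}$ because the $2\times 2$ block sits on the aliased pair is a nice explicit justification of what the paper dispatches with ``It can be readily verified,'' and your non-negativity check for $T'^{\ttt{float}'}$ is exactly the finite case analysis required.
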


We can conclude that the distribution over observations is the same, regardless of the action sequences taken in either POMDP, since these observation distributions can be read from the row of either Hankel matrix indexed by the empty history.
What our counterexample suggests is that knowledge of the internal structure of a POMDP is \textit{not necessary} to predict future outcomes of the system.
While being able to predict future trajectories may be sufficient for planning (with goals set as a function of observations), the ambiguity of internal structure prevents us from fine-grain model manipulation such as setting goals on specific latent states for planning.
Learning exact models that can be leveraged in this way can only be done when certain conditions are met.


\vspace{-.1in}
\section{Related Work}\label{sec:related_work}
\vspace{-.1in}




Much of the theory surrounding Hankel matrix methods has been developed to study Hidden Markov Models (HMMs), which can be viewed as POMDPs with a single action.
Hankel factorization approaches to learn HMMs have been rediscovered many times \cite{itoIdentifiabilityHidden1992,balleSpectralLearning2014,jaegerDiscretetimeDiscretevalued1998}.
Since then, conditions under which the Hankel matrix represents a finite-state HMM have been investigated \cite{andersonRealizationProblem1999,vidyasagarCompleteRealization2011}.
Sample complexity and runtime complexity of these methods have also been studied in many contexts, including from sample trajectories of the HMM \cite{huangMinimalRealization2016,sharanLearningOvercomplete2017} and from directly querying Hankel entries~\cite{mahajanLearningHidden2023}. 

The study of POMDP-learning has been informed by the development of Hankel methods. 
PSRs \cite{littmanPredictiveRepresentations2001,singhPredictiveState2004,bootsClosingLearningplanning2011,wolfeLearningPredictive2005} were discovered separately, but share their mathematical basis with Hankel factorization approaches to Hidden Markov Models above.
In the learning theory community, these spectral approaches have largely been focused on learning predictive models for model-based reinforcement learning \cite{jinSampleEfficientReinforcement2020,chenLowerBounds2023,liuWhenPartially2022,chenPartiallyObservable2022}, but not learning explicit transition and observation distributions. 
The study of \textit{exact} HMM and POMDP recovery has largely been done using tensor decomposition methods, which require full-rankness assumptions on transition and row-stochastic observation matrices \cite{mosselLearningNonsingular2006,huangMinimalRealization2016,azizzadenesheliReinforcementLearning2016,guoPACRL2016}. 
Relaxing these assumptions is the focus of this paper.

Other approaches to learning POMDPs (and HMMs) have also been explored.
Unlike the stochastic case, POMDPs have been shown to be learnable when actions and observations are deterministic by matching states as equivalence classes of observed histories that produce the same outputs on future actions \cite{angluinLearningRegular1987,rivestInferenceFinite1993,deanInferringFinite1995,brafmanRegularDecision2019,roncaMarkovAbstractions2022}.
The Expectation-Maximation algorithm has been applied to both POMDPs and HMMs \cite{rabinerTutorialHidden1989,shatkayLearningGeometricallyConstrained2002}, but are often stuck in local minima.
Mixed-integer programs to search through automata \cite{toroicarteLearningReward2023} and inductive logic schemes \cite{amirLearningPartially2008} have also been applied.
An alternate view of learning deterministic discrete systems has been studied from the lens of combinatorial filters \cite{sakcakMathematicalCharacterization2024}.
Recurrent deep-learning-based architectures have also been explored~\cite{wangLearningBelief2023,allenMitigatingPartial2024,agarwalPerSimDataEfficient2021}.

\section{Conclusion and Future Work}
\label{sec:conclusion}
We present a method that learns the parameters of a discrete POMDP from an action-observation sequence gathered under a random exploration policy up to a partition of the state space.
Our approach applies tensor decomposition methods to estimate a similarity that to recover a full POMDP model under a milder set of assumptions relative to prior work. 
In domains where each state has a unique observation distribution aggregated across all full-rank actions, we recover the true POMDP.
Otherwise, we learn the transitions between full-rank observability partitions of the state space.
We also show that we can only distinguish POMDPs up to observability partitions from one sequential data trajectory.

One limitation of our work is the assumption that the forward and backward matrices are full-rank.
In general, it is known that the rank of the Hankel matrix can be strictly smaller than the minimal number of states of an HMM, which is also a POMDP \cite{jaegerDiscretetimeDiscretevalued1998,vidyasagarCompleteRealization2011}.
While we have learned much from Hankel factorization algorithms for POMDP learning, the presence of this rank discrepancy suggests a broader algorithmic framework may be required to better understand the problem.
Another limitation is that the algorithm only learns models from domains with few states due to the size of the requisite Hankel matrix.
Prior work on HMMs characterizes computational efficiency based on the minimum size of the Hankel matrix required to ensure full-rankness \cite{huangMinimalRealization2016,sharanLearningOvercomplete2017}.
Generalizing these results to POMDPs may help us scale our algorithms to larger systems.



\begin{credits}
\subsubsection{\ackname}
We thank Prof. Leslie Pack Kaelbling and Prof. Tom\'as Lozano-P\'erez for their constructive feedback on early versions of the formulation of the problem studied in this paper.
Our use of large language models is solely limited to the completion of routine coding tasks, such as experiment launching and plotting code.
This work was supported by ARL under Grant W911NF-23-2-0012, ONR under Grant N00014-22-1-267, the NSF Graduate Research Fellowship Program under Grant 2141064, and the Siegel Family Quest for Intelligence. 
The views and conclusions contained in this document are those of the authors
and should not be interpreted as representing the official policies, either
expressed or implied, of DEVCOM Army Research Laboratory, the Office of Naval Research, the National
Science Foundation, the U.S. Government, or the Siegel Family Quest for Intelligence.

\subsubsection{\discintname}
The authors have no competing interests to declare that are
relevant to the content of this article. 
\end{credits}
%
%
%
\bibliographystyle{splncs04}
\bibliography{references}

\newpage
\renewcommand{\thesection}{\Alph{section}}
\setcounter{section}{0}

\section{Appendix: Omitted Proofs}\label{sec:appendix-proofs}

\subsection{Formalization of Assumptions Consequences}\label{sec:proof-assumption-consequences}

\begin{lemma}\label{lem:assumptions-consequences}
    Let $(\mc S, \mc T, \mc A, \mc O, \mc Z, b_0, R, \gamma)$ be a POMDP. Suppose that the agent has collected a trajectory $\mc D_n = (a_1,o_1,\dots,a_n,o_n)$ for $n > 0$, where $a_i \sim \Unif(\mc A)$ for all $i$.
    Suppose the POMDP admits the assumptions outlined above.
    Furthermore, let $\HHhat$ be the `empirical' Hankel matrix as computed in \cref{eqn:emp-hankel-est}.
    Consider the Hankel matrix in the `limit of infinite data,' where $\HH = \lim_{n \to \infty} \HHhat$.
    Then $\HH$ is the Hankel matrix of POMDP $(\mc S, \mc T, \mc A, \mc O, \mc Z, b_{\pi}, R, \gamma)$ and $\rank(\HH) = |\mc S|$.
\end{lemma}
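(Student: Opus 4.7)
The plan is to establish the two claims of Lemma \ref{lem:assumptions-consequences} in sequence: first that $\HH$ coincides with the Hankel matrix of the POMDP whose initial distribution has been replaced by $b_\pi$, and second that $\rank(\HH) = |\mc S|$.

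For the first claim, I would appeal to Birkhoff's ergodic theorem applied to the joint Markov chain $(s_t, a_t, o_t)_{t \geq 1}$, which is ergodic by assumption, together with the fact that any length-$L$ block of an ergodic chain is itself an ergodic process on the product space. Fixing a history $\hist$ and test $\ttest$ of combined length $L$, the numerator of Eq. \ref{eqn:emp-hankel-est} divided by $n - L$ is a time average of indicators of a length-$L$ window of the chain, and converges almost surely to the stationary probability of that window. Since the actions in the random exploration policy are independent and uniform, this stationary probability factors as $\PP(o_1, \ldots, o_L \mid a_1, \ldots, a_L, s_1 \sim b_\pi) \cdot |\mc A|^{-L}$. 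The denominator of Eq. \ref{eqn:emp-hankel-est} divided by $n - L$ analogously converges to $|\mc A|^{-L}$. The ratio therefore converges almost surely to $\PP(o_1, \ldots, o_L \mid a_1, \ldots, a_L, s_1 \sim b_\pi)$, which matches Eq. \ref{eqn:hankel-entry} for the POMDP with initial distribution $b_\pi$.

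For the rank claim, I would use the factorization $\HH = \forw \cdot \back$ of Sec. \ref{sec:hankel-construction}, now built with $b_\pi$ as justified by the first claim. By the standing assumption in Sec. \ref{sec:assumptions}, both $\forw$ and $\back$ have rank $|\mc S|$ in this regime; this is consistent with the submatrix argument recalled in Sec. \ref{sec:assumptions}, since $b_\pi$ has full support and so $\diag(b_\pi)$ and $\diag(b_\pi)^{-1}$ are well-defined and invertible. Because $\forw$ has $|\mc S|$ linearly independent columns, left-multiplication by $\forw$ is an injective linear map, hence $\rank(\forw \cdot \back) = \rank(\back) = |\mc S|$. Combined with the general upper bound $\rank(\HH) \leq \min(\rank(\forw), \rank(\back)) = |\mc S|$, this yields equality.

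The main obstacle is invoking the ergodic theorem correctly for finite-window functionals of the joint chain, and in particular arguing that the empirical ratio in Eq. \ref{eqn:emp-hankel-est} has the right conditional-probability limit. A minor subtlety is that the trajectory is initialized at $b_0$ rather than $b_\pi$, but the almost-sure convergence of time averages of an ergodic chain is insensitive to its initial condition, so the limit is dictated entirely by the stationary distribution $b_\pi$; the $1/(n-L)$ versus $1/n$ normalization also washes out as $n \to \infty$.
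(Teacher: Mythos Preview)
Your proposal is correct and follows essentially the same approach as the paper: invoking the ergodic theorem on sliding windows of the joint chain $(s_t,a_t,o_t)$ to identify the limiting Hankel entries with those of the $b_\pi$-initialized POMDP, and then using the full-rank factorization $\HH=\forw\cdot\back$ for the rank claim. The only minor difference is that the paper argues $\rank(\HH)=|\mc S|$ by exhibiting an $|\mc S|\times|\mc S|$ full-rank submatrix of $\HH$, whereas you use the equivalent observation that left-multiplication by a full-column-rank $\forw$ is injective.
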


Here, we prove the consequences of the assumtions discussed in Section \ref{sec:assumptions}.
For our proof, we rely on a fundamental result on the convergence of ergodic Markov chains to stationary distributions.

\begin{theorem}[Ergodic Theorem, \cite{norrisDiscretetimeMarkov1997}]\label{thm:ergodic}
    Let $T$ be ergodic with stationary distribution $\pi$, and let $b_0$ be any initial distribution. Let $(X_n)_{n \geq 0}$ be a Markov chain with respect to $T$ with initial distribution $b_0$.
    Then, for any bounded function $f: \mc S \to \R$ we have

    $$
        \PP \left(
        \lim_{n \to \infty}
        \frac 1 n \sum_{k=1}^{n-1} f(X_k) = \bar f
        \right) = 1
    $$
    where
    $$
        \bar f = \sum_{i \in \mc S} \pi_i f(i),
    $$
    and $b_\pi$ is the stationary distribution of $T$.
\end{theorem}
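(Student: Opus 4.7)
The plan is to prove the Ergodic Theorem via the classical \emph{regenerative / excursion decomposition} argument, which reduces time averages of a Markov chain to i.i.d.\ sums over excursions between returns to a distinguished state. Under ergodicity the chain is positive recurrent with stationary distribution $\pi$, so every state is visited infinitely often almost surely and has finite mean return time.

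First I would fix an arbitrary recurrent state $i \in \mc S$ and define the sequence of return times $T_0 = \inf\{n \geq 0 : X_n = i\}$ and $T_{k+1} = \inf\{n > T_k : X_n = i\}$ for $k \geq 0$. By the strong Markov property at each $T_k$, the excursions $\xi_k = (X_{T_k}, X_{T_k+1}, \dots, X_{T_{k+1}-1})$ are i.i.d.\ for $k \geq 0$. In particular the cycle lengths $L_k = T_{k+1} - T_k$ are i.i.d.\ with common mean $m_i = 1/\pi_i$ (a standard consequence of positive recurrence and the formula for $\pi$ in terms of expected occupation between returns), and the excursion sums
\begin{equation*}
Y_k = \sum_{j=T_k}^{T_{k+1}-1} f(X_j)
\end{equation*}
are i.i.d.\ with mean $\mb E[Y_k] = m_i \cdot \bar f = m_i \sum_{j \in \mc S} \pi_j f_j$, which follows from writing $\pi_j$ as the expected fraction of time spent in state $j$ during one cycle and using boundedness of $f$ to justify the interchange.

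Next I would apply the classical strong law of large numbers to obtain $\frac{1}{N} \sum_{k=0}^{N-1} L_k \xrightarrow{a.s.} m_i$ and $\frac{1}{N} \sum_{k=0}^{N-1} Y_k \xrightarrow{a.s.} m_i \bar f$. For a given $n$, let $N(n) = \max\{k : T_k \leq n\}$ be the number of completed excursions by time $n$. Since $T_{N(n)} \leq n < T_{N(n)+1}$ and $T_N / N \to m_i$ a.s., a sandwich argument gives $N(n)/n \to 1/m_i$ a.s. Writing
\begin{equation*}
\sum_{k=0}^{n-1} f(X_k) = \sum_{k=0}^{T_0 - 1} f(X_k) + \sum_{\ell = 0}^{N(n)-1} Y_\ell + R_n,
\end{equation*}
where $R_n$ is the partial excursion sum after $T_{N(n)}$, the initial term is a.s.\ finite and the residual $R_n$ is bounded in absolute value by $\|f\|_\infty L_{N(n)}$. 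The residual is $o(n)$ a.s.\ because $L_N / N \to 0$ a.s.\ (being an i.i.d.\ sequence with finite mean), hence $L_{N(n)}/n \to 0$. Dividing by $n$ and combining the two SLLN limits yields
\begin{equation*}
\frac{1}{n}\sum_{k=0}^{n-1} f(X_k) \xrightarrow{a.s.} \frac{1}{m_i} \cdot m_i \bar f = \bar f.
\end{equation*}

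Finally I would argue that the conclusion is independent of the initial distribution $b_0$: the first return time $T_0$ is almost surely finite (by irreducibility and recurrence) for any starting distribution, so only the non-random limit after $T_0$ matters. I expect the main obstacle to be the careful justification of the identity $\mb E[Y_0] = m_i \bar f$, which requires either invoking the standard characterization of $\pi$ via the cycle formula $\pi_j = \mb E_i[\text{visits to } j \text{ in one cycle}]/m_i$ or re-deriving it from uniqueness of the stationary distribution; this is the one place where ergodicity (as opposed to mere irreducibility) is used in an essential way. Aperiodicity is not strictly needed for the SLLN form of the theorem, but it is already part of the hypothesis and so no separate treatment is required.
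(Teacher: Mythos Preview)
Your proof outline is correct and follows the standard regenerative/excursion decomposition argument; in fact it is essentially the proof given in Norris's book, which is the reference the paper cites. Note, however, that the paper does not prove this theorem at all: it is stated as a cited result from \cite{norrisDiscretetimeMarkov1997} and invoked as a black box in the proof of Lemma~\ref{lem:assumptions-consequences}, so there is no ``paper's own proof'' to compare against. Your sketch would serve as a valid self-contained proof if one were desired.
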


We also require knowledge of the stationary distribution of Markov chains created as a `sliding window' of another ergodic Markov chain.

\begin{lemma}\label{lem:n-step-converges}
    Let $(Y_t)_{t \geq 0}$ be a Markov chain with ergodic transition matrix $T$, with stationary distribution $\pi$, over state space $\mc Y = \{1, \dots, k\}$.
    Then the Markov chain $(Y_t, Y_{t+1}, \dots Y_{t+{n-1}})_{t \geq 0}$ is also ergodic, with stationary distribution $\tilde \pi(i_1, \dots i_n) = \pi_{i_0} T_{i_0, i_1} \dots T_{i_{n-2}, i_{n-1}}$.
\end{lemma}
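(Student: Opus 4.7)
The plan is to construct the transition kernel of the sliding-window chain, verify stationarity of $\tilde \pi$ by a direct sum using $\pi T = \pi$, and then derive irreducibility and aperiodicity from the same properties of $T$. The main conceptual care is in identifying the correct effective state space: tuples $(i_0, \ldots, i_{n-1}) \in \mc Y^n$ with any $T_{i_k, i_{k+1}} = 0$ have $\tilde \pi = 0$ and are never reached by the sliding process, so the relevant space is the set $\mc V$ of tuples satisfying $T_{i_k, i_{k+1}} > 0$ for all $k \in \{0, \ldots, n-2\}$.

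First I would observe that $Z_t := (Y_t, \ldots, Y_{t+n-1})$ is Markov: given $Z_t$, the first $n-1$ coordinates of $Z_{t+1}$ are determined, and the last coordinate has a distribution depending only on $Y_{t+n-1}$ by the Markov property of $Y$. This yields the transition kernel
\begin{equation*}
    \tilde T_{(i_0,\ldots,i_{n-1}),(j_0,\ldots,j_{n-1})} =
    \begin{cases}
        T_{i_{n-1}, j_{n-1}} & \text{if } j_k = i_{k+1} \text{ for } k = 0, \ldots, n-2, \\
        0 & \text{otherwise.}
    \end{cases}
\end{equation*}

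Next I would verify $\tilde \pi \tilde T = \tilde \pi$ on $\mc V$. The only predecessors of $(j_0, \ldots, j_{n-1})$ with nonzero transition under $\tilde T$ are tuples of the form $(i_0, j_0, j_1, \ldots, j_{n-2})$, so
\begin{align*}
    (\tilde \pi \tilde T)(j_0, \ldots, j_{n-1})
    &= \sum_{i_0} \pi_{i_0} T_{i_0, j_0} \cdot T_{j_0, j_1} \cdots T_{j_{n-3}, j_{n-2}} \cdot T_{j_{n-2}, j_{n-1}} \\
    &= (\pi T)_{j_0} \cdot T_{j_0, j_1} \cdots T_{j_{n-2}, j_{n-1}}
     = \tilde \pi(j_0, \ldots, j_{n-1}),
\end{align*}
where the last equality uses $\pi T = \pi$. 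A separate telescoping marginalization shows $\sum_{v \in \mc V} \tilde \pi(v) = \sum_{i_0} \pi_{i_0} = 1$, so $\tilde \pi$ is a bona fide probability distribution.

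Finally I would establish ergodicity of $\tilde T$ on $\mc V$. For irreducibility, given $u = (u_0, \ldots, u_{n-1})$ and $v = (v_0, \ldots, v_{n-1})$ in $\mc V$, irreducibility of $T$ supplies a path $u_{n-1} \to w_1 \to \cdots \to w_m = v_0$ of positive probability; extending by the successive coordinates of $v$ (each of positive probability since $v \in \mc V$) yields a sliding-window trajectory from $u$ to $v$ of positive probability. For aperiodicity, aperiodicity and irreducibility of $T$ on a finite state space give some $N$ with $T^N$ strictly positive entrywise, and the same concatenation argument then shows $\tilde T^{N + n - 1}$ is strictly positive on $\mc V \times \mc V$, so every state in $\mc V$ has period $1$. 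Hence $\tilde T$ restricted to $\mc V$ is ergodic, and $\tilde \pi$ is its unique stationary distribution. The principal obstacle, as flagged, is correctly restricting attention to $\mc V$; once that is done, each step is a direct transfer of the corresponding property of $T$.
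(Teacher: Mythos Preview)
Your stationarity computation is exactly the paper's argument: write down the sliding-window transition kernel, observe that only predecessors of the form $(i_0,j_0,\ldots,j_{n-2})$ contribute, and collapse the sum over $i_0$ using $\pi T=\pi$. In that respect the approaches coincide.

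Where you differ is in coverage. The paper's proof stops after verifying $\tilde\pi\tilde T=\tilde\pi$; it does not actually establish irreducibility or aperiodicity of the window chain, nor does it address the effective state space. Your proposal supplies all three: you correctly restrict to $\mc V=\{(i_0,\ldots,i_{n-1}):T_{i_k,i_{k+1}}>0\}$, give a path-concatenation argument for irreducibility, and use primitivity of $T$ (some $T^N$ strictly positive) to show $\tilde T^{N+n-1}$ is strictly positive on $\mc V\times\mc V$, which yields aperiodicity (and in fact re-proves irreducibility). So your argument is strictly more complete than the paper's, at the cost of a little extra bookkeeping; the paper appears to take ergodicity of the window chain as folklore and only checks the form of the stationary distribution.
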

\begin{proof}
    The transitions of this Markov chain can be expressed as
    $$
        \PP(i_{t+n}, \dots, i_{t+1} | i_{t+(n-1)}, \dots, i_t) = T_{i_{t+{n-1}}, i_{t+n}}
    $$
    and zero if the indices do not follow the `sliding window' form above. We verify the stationary distribution claimed in the conclusion of the statement.
    Suppose that $(j_1, \dots, j_n)$ is a state of the Markov chain $(Y_t, Y_{t+1}, Y_{t+n-1})$.
    When we apply the transition likelihoods above, we find that
    \begin{align*}
         & \sum_{i_1, \dots, i_n \in \mc Y} \tilde \pi(i_1, \dots i_n) \PP(Y_{t+1}=j_1, \dots, Y_{t + n}=j_n | Y_t = i_1, \dots, Y_{t + n - 1} = i_n) \\
         & = \sum_{i_1} \pi(i_1) T_{i_1, j_1} \dots, T_{j_{n-1}, j_n}                                                                                 \\
         & = \pi(j_1) T_{j_1, j_2} \dots T_{j_{n-1}, j_n} = \tilde \pi(j_1, \dots, j_n).
    \end{align*}
    The second line applies the definition of a transition above and the proposed stationary distribution, and the third line uses the fact that $\pi$ is the stationary distribution of $T$.
\end{proof}

We now have all the tools we need to prove Lemma \ref{lem:assumptions-consequences}.
\begin{proof}

    \textbf{The Hankel matrix takes on the stationary distribution $b_\pi$ as the stationary distribution}:



    We first consider the stochastic process $X_t = (s_t, a_t, o_t)$, which is a Markov chain due to the factorization structure of a POMDP \cite{thrun_probabilistic_2005}.
    Per our assumptions in Section \ref{sec:assumptions}, we assume that the Markov chain $(X_t)_{t \geq 0}$ over the state space $\mc X = \{(s^i, o^j, a^k) \in \mc S \times \mc O \times \mc A : \PP(o^k | s^i a^k) > 0 \}$ is ergodic.
    Suppose that its stationary distribution is $p_{\pi}$.
    We denote ${p_\pi}(s, a, o)$ to be the likelihood of the Markov chain $(s, a, o)$ under $p_\pi$.
    Of interest is the marginal stationary distribution of the \textit{POMDP} state $s \in \mc S$, under $p_\pi$, which we denote as the vector $b_\pi$, where $(b_\pi)_{i} = \sum_{a \in \mc A, o \in \mc O} {p_{\pi}}(s, a, o)$.

    We observe that for any $(a_0, o_0, s_0, \dots, a_{k-1}, o_{k-1}, s_{k-1}) \in \mc X^{k}$, under Lemma \ref{lem:n-step-converges} and Theorem \ref{thm:ergodic},
    \begin{equation}\label{eqn:n-step-conv}
        \begin{aligned}
             & \lim_{n \to \infty}
            \frac 1 {n - k}
            \sum_{i=k}^n \mb I_{a_0, o_0, s_0, \dots, a_{k-1}, o_{k-1}, s_{k-1} = X_{i-k}, \dots, X_i}                \\
             & =
            {p_{\pi}}(a_0, o_0, s_0) \prod_{i=0}^{k-1} \PP(X_{i+1} = s_{i+1}, o_{i+1}, a_{i+1} | X_i = s_i, o_i, a_i) \\
        \end{aligned},
    \end{equation}
    \textit{almost surely} for any integer $k \geq 1$. Let $\PP$ corresponds to the law induced by the stationary distribution $p_{\pi}$ and transition and observation models of the POMDP.
    We will now use \cref{eqn:n-step-conv} as a way to understand the convergent values of the Hankel matrix.

    Let $\bar{\mathcal{D}_n} = (s_1, a_1, o_1,  \dots, s_n, a_n, o_n)$ be a sequence of the induced Markov chain, and let $\mc D_n = (a_1, o_1, \dots, a_n, o_n)$ be the same dataset with the state variable omitted.
    Let $\hist$ $= (a^{j_1}, o^{k_1}, \dots, a^{j_t}, o^{k_t})$ and $\ttest = (a^{j_{t+1}}, o^{k_{t+1}}, \dots, a^{j_L}, o^{k_L})$ be action-observation sequences, length $L < n$.
    Then, we may evaluate the empirical Hankel matrix $\HHhat$ using \cref{eqn:n-step-conv}.
    \begin{align*}
         & \HHhat_{\hist, \ttest}       \\
         & = \frac{
        \sum_{i=1}^{n - L} \mb I_{(a_i,o_i,\dots,a_{i+L},o_{i+L}) = \hist \oplus \ttest}
        }{
        \sum_{i=1}^{n - L} \mb I_{(a_i,\dots,a_{i+L},) = (a^{j_1}, \dots, a^{j_L})}
        }                               \\
         & =
        \frac{1 / (n - L)}{1 / (n - L)} \\ &
        \quad \cdot
        \frac{
        \sum_{s^{m_1}, \dots, s^{m_L}}
        \sum_{i=1}^{n - L}  \mb I_{(a_i,o_i,s_i\dots,a_{i+L},o_{i+L},s_{i+L}) = (a^{j_1},o^{k_1},s^{m_1}, \dots, a^{j_L},o^{k_L},s^{m_L})}
        }{
        \sum_{o^{k_1}s^{k_1}, \dots, o^{k_L}s^{k_L}}
        \sum_{i=1}^{n - L} \mb I_{(a_i, o_i, s_i, \dots, a_{i+L}, o_{i+L} , s_{i+L}) = (a^{j_1}, o^{k_1}, s^{m_1}, \dots, a^{j_L}, o^{k_L}, s^{m_L})}
        }
    \end{align*}

    Taking the limit of $n$ to infinity on both sides and applying \cref{eqn:n-step-conv} (noting the denominator is nonzero almost surely under a uniform random exploration policy) yields
    \begin{align*}
         & \lim_{n \to \infty} \HHhat_{\hist,\ttest}                                                                                                               \\
         & = \PP(o^{k_1}, \dots, o^{k_L} | a^{j_1}, \dots a^{j_L})                                                                                                 \\
         & = \sum_{s^{m_1}, \dots, s^{m_L}, s^{m_{L+1}}} \PP(s^{m_1}, o^{k_1}, \dots, s^{m_L}, o^{k_L}, s^{m_{L+1}} | a^{j_1}, \dots a^{j_L})                      \\
         & = \sum_{s^{m_1}, \dots, s^{m_L}, s^{m_{L+1}}} \PP(s^{m_1}) \PP(o^{k_1}, s^{m_2} | s^{m_1} a^{m_1}) \cdots \PP(s^{m_{L+1}}, o^{k_L} | s^{m_{L}} a^{j_L}) \\
         & = b_\pi O^{a^{j_1}o^{k_1}} T^{a^{j_1}} \cdots O^{a^{j_L}o^{k_L}} T^{a^{j_L}} \cdot \mathbf 1
    \end{align*}

    The second line unmarginalizes the state, and the third line factorizes the full likelihood in terms of the POMDP's transition and observation conditional likelihoods. The last line unpacks the probability law $\PP$ introduced by \cref{eqn:n-step-conv} back into matrix notation.
    We can see that splitting the product for the last line above over $\hist$ and $\ttest$ will reproduce individual rows and columns of $\forw$ and $\back$, respectively.
    Finally, we observe that $\forw$ has taken on the distribution $b_\pi$ as the initial vector in the product.





    \textbf{The Hankel matrix is full-rank}:
    Since we know a submatrix of $\forw$ formed a subselection of rows is full-rank then the full forward matrix $\forw$ is full-rank as well.
    Thus, we know that both of the $\back$ and $\forw$ are full-rank. This means we can find $|S|$ linearly-independent rows of $\back$ and $|S|$ linearly-independent columns of $\forw$, which we assemble into submatrices $K$ and $W$ respectively.
    We observe, then, that $K$ and $W$ are full-rank and square.
    The product $K \cdot W$, then, must also be full-rank and square.
    When we multiply $\forw \cdot \back = \HH$, we observe, then, that $K \cdot W$ is a submatrix of $\HH$.
    Then we know that $$|S| = \rank(K \cdot W) \leq \rank(\HH) \leq \min(\rank(\forw), \rank(\back)) = |S|,$$ so $\rank(H) = |S|$.
\end{proof}

\subsection{Proof of Proposition \ref{prop:prob-machine-off-by-transform}}\label{sec:appendix-proof-sim-trans}

We have two rank factorizations of $\HH$: $\forw \cdot \back = A \cdot V^T = \HH$.
Since all matrix factors involved are full-rank, we may take the Moore-Penrose inverse of $\forw$ and $\back$, which results in $\forw^\dagger A \cdot V^T \back^\dagger = I$.
Then $\forw^\dagger A$ is nonsingular and $V^T \back^\dagger$ is its inverse.

We take the product $(\forw^\dagger A)$ to be $P$. A consequence of the assumptions in Section \ref{sec:assumptions} is that $A_{\hists^{-ao, :}}$ is full-rank for all $a \in \mc A$ and $o \in \mc O$.
Thus, we could have repeated the argument above, but replacing $A$ with $A_{\hists^{-ao}}$ and $\HH$ with $\HH_{\hists^{-ao}, :}$, and find that $P = \forw_{\hists^{-ao}, :}^\dagger A_{\hists^{-ao}}$.

What remains is to show that we can apply $P$ to recover the POMDP initial belief, diagonal observation matrices, transition matrices, and final summing vector from the linear PSR models.
Let $a \in \mc A$ and $o \in \mc O$.
Following \cref{eqn:trans-obs-solve}, we know that $M^{ao} = A_{\hists^{-ao, :}}^\dagger \cdot \HH_{\hists^{ao}, :} \cdot {V^T}^{\dagger}$.
If we apply $P$ as a similarity transform, we find that
\begin{align*}
    P^{-1} M^{ao} P & = P^{-1} A_{\hists^{-ao, :}}^\dagger \cdot \HH_{\hists^{ao}, :} \cdot {V^T}^{\dagger} P                                \\
                    & = P^{-1} A_{\hists^{-ao, :}}^\dagger \cdot \forw_{\hists^{-ao},:} \cdot O^{ao} T^a \cdot \back \cdot {V^T}^{\dagger} P \\
                    & = P^{-1} P \cdot O^{ao} T^a P^{-1} P = O^{ao} T^a
\end{align*}

The initial belief vector $b_0$ and all ones vector $\mathbf 1$ can be recovered from the initial vectors $m_0$ and $m_\infty$ in the same manner (they only feature an inversion of $P$ on either the left or right sides, but not both as above).\hfill$\square$

\subsection{Proof of Lemma \ref{lem:unique-obs-unique-eigvals}}\label{sec:appendix-lemma-rand-sum}

The `only if' direction is immediate. We prove the `if' direction by proving its contrapositive.

Fix $i, j$ such that $1 \leq i < j \leq |\mc S|$. Suppose that there exists an $a \in \Afull$, $o \in \mc O$ such that $O^{ao}_{ii} \neq O^{ao}_{jj}$. Let $(ao)_1, \dots, (ao)_{|\Afull|\cdot|\mc O|}$ be an ordering on $\Afull \times \mc O$.
Let $a, b$ be two $|\Afull| \cdot |\mc O|$-dimensional vectors such that $a_k = O^{(ao)_{k}}_{ii}$ and $b_k = O^{(ao)_{k}}_{jj}$ for all $1 \leq k \leq |\Afull| \cdot |\mc O|$. Then we know that $a \neq b$.

Consider the event that $w \in \mathbb S^{|\Afull|\cdot|\mc O|}$, such that $\Lambda_{ii} = \sum_{k}^{|\Afull|\cdot|\mc O|} w_k O^{(ao)_k}_ii$  and $\Lambda_{jj} = \sum_{k}^{|\Afull|\cdot|\mc O|} w_k O^{(ao)_k}_jj$ are equivalent. Written in terms of the notation introduced above, we have that $\angleb{a - b, w} = 0$.
This means that $w$ must be contained in the hyperplane $H = \{x \in \R^{|\Afull|\cdot|\mc O|}: \angleb{x, a-b} = 0 \}$, which also passes through the origin.
We recognize that $H \cap \mb S^{|\Afull|\cdot|\mc O|}$ is a $|\Afull|\cdot|\mc O|-2$-dimensional submanifold (a lower-dimensional sphere) of $\mc S^{|\Afull|\cdot|\mc O|}$.
We know that the measure of this submanifold under the induced uniform measure on $\mb S^{|\Afull|\cdot|\mc O| - 1}$ from the Lebesgue measure on $\mathbb R^{|\Afull|\cdot|\mc O| - 1}$ is zero \cite{Lee2012}.
Thus, the probability of sampling $w$ so that $\Lambda_{ii} = \Lambda_{jj}$ is zero as well.
Therefore, the complement of this event, that $\Lambda_{ii} \neq \Lambda_{jj}$, must occur with probability one.
\hfill $\square$

\begin{remark}
    By a similar argument above, we obtain that $\diag(\Lambda) \neq \mathbf 0$ with probability 1, where $\mathbf 0$ is the zero vector.
    The argument replaces the discussion of the vector $a-b$, as constructed above, with the individual vectors $a$ or $b$.
\end{remark}

\subsection{Proof that Similarity Transform is Recovered up to Block-Diagonal Matrix}\label{sec:proof-recovery-up-to-block-diag}

First, we formalize the claim made in Sec. \ref{sec:trans-partitions}.

\begin{lemma}\label{lem:nonunique-obs-nonunique-P}
    Let $P'$ be the similarity transform as determined by an eigendecomposition of the random sum of \cref{eqn:rand-sim-obs-sum}.
    Without loss of generality, permute the columns of $P'$ and $P$ so that states in the same full-rank observability partition are in consecutive indices. Then

    \begin{equation}\label{eqn:block-similarity-transform}
        P^{-1}P' =
        \begin{pmatrix}
            Q_1 & 0   & \cdots & 0   \\
            0   & Q_2 & \cdots & 0   \\
            0   & 0   & \ddots & 0   \\
            0   & 0   &        & Q_k \\
        \end{pmatrix}
    \end{equation}
    where the blocks $Q_i \in \R^{|S_i| \times |S_i|}$ are nonsingular w.p. 1, where $|S_i|$ is the $i^{\text{th}}$ partition in the permuted index ordering.
\end{lemma}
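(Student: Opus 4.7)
\medskip

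\noindent\textbf{Proof proposal.}
The plan is to view both $P$ and $P'$ as eigenvector bases of the same random-sum matrix and then argue that a change of basis between two such bases must respect the eigenspace decomposition. Concretely, by Eq.~\ref{eqn:rand-sim-obs-sum} the random sum equals
\[
\sum_{a \in \Afull, o \in \mc O} w^{ao} \simobs \;=\; P\, \Lambda\, P^{-1},
\quad \Lambda = \sum_{a \in \Afull, o \in \mc O} w^{ao} O^{ao},
\]
where $\Lambda$ is diagonal. Lemma~\ref{lem:unique-obs-unique-eigvals} guarantees, with probability $1$, that $\Lambda_{ii} = \Lambda_{jj}$ if and only if $i$ and $j$ are in the same full-rank observability partition. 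After the permutation specified in the lemma statement, this means $\Lambda$ is block-diagonal with blocks of the form $\lambda_i \cdot I_{|S_i|}$ for distinct scalars $\lambda_1, \dots, \lambda_k$.

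Next I would exploit the fact that $P'$ is, by construction, another eigenvector matrix of the random sum. Thus the columns of $P'$ indexed by the permuted block $S_i$ lie in the $\lambda_i$-eigenspace of $P \Lambda P^{-1}$, and so do the columns of $P$ indexed by $S_i$. Since the $\lambda_i$-eigenspace has dimension exactly $|S_i|$ (as the $\lambda_i$ are distinct across $i$) and both collections are linearly independent, each collection is a basis of that eigenspace. Therefore the change-of-basis matrix $P^{-1}P'$ maps the subspace spanned by the basis vectors indexed by $S_i$ to itself, which is exactly the block-diagonal structure of Eq.~\ref{eqn:block-similarity-transform}.

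Finally, the invertibility of each $Q_i$ is immediate: $P$ is nonsingular by Prop.~\ref{prop:prob-machine-off-by-transform} and $P'$ is nonsingular because its columns are a full set of eigenvectors of a diagonalizable matrix, so $P^{-1}P'$ is nonsingular, and a block-diagonal matrix is nonsingular if and only if each diagonal block is nonsingular.

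The main obstacle, which is largely dispatched by invoking Lemma~\ref{lem:unique-obs-unique-eigvals}, is making sure the probability-one event that ``$\Lambda_{ii}=\Lambda_{jj}$ iff $i,j$ share an observability partition'' carries over to guaranteeing that the geometric multiplicity of each $\lambda_i$ as an eigenvalue of $P\Lambda P^{-1}$ is exactly $|S_i|$; this requires no more than noting that similarity preserves eigenvalue multiplicities and that a diagonal matrix's geometric multiplicities equal the number of appearances on the diagonal. A minor bookkeeping point worth highlighting is that the eigendecomposition used to build $P'$ can return eigenvectors in any order within a repeated-eigenvalue block, which is precisely what the permutation in the statement is allowed to absorb.
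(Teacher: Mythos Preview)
Your proposal is correct and follows essentially the same approach as the paper: both invoke Lemma~\ref{lem:unique-obs-unique-eigvals} to pin down the eigenvalue multiplicities, then use that $P$ and $P'$ diagonalize the same matrix to force block-diagonality, and finish by noting that nonsingularity of $P^{-1}P'$ implies nonsingularity of each block. The only cosmetic difference is that the paper extracts the block structure via the commutation relation $(P'^{-1}P)\Lambda = \Lambda(P'^{-1}P)$ and the entrywise identity $(\Lambda_{ii}-\Lambda_{jj})(P'^{-1}P)_{ij}=0$, whereas you phrase the same fact geometrically in terms of eigenspaces; these are equivalent arguments.
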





Let $X$ denote the random sum as expressed in \cref{eqn:rand-sim-obs-sum}, and let $P\Lambda P^{-1}$, $P'\Lambda P'^{-1}$ be the two diagonalizations as discussed in Section \ref{sec:obs-as-eigenvalues}.
Let $S_{\Pi} = \{S_1, \dots, S_k\}$ be the full-rank observability partition.
By Lemma \ref{lem:unique-obs-unique-eigvals}, then $\Lambda_{ii} = \Lambda_{jj}$ for all $s^i, s^j$ in the same partition as $S \in S_{\Pi}$.
Furthermore, by the remark in Section \ref{sec:appendix-lemma-rand-sum}, we know that $\diag(\Lambda) \neq \mathbf 0$ with probability 1, where $\mathbf 0$ is the zero vector.

Suppose the indices of these matrices are ordered as stated in the hypothesis.
Since $X = P \Lambda P^{-1} = {P'} \Lambda P'^{-1}$, then we have that $P'^{-1} P \Lambda = \Lambda P'^{-1} P$ (e.g. $P'^{-1} P$ and $\Lambda$ commute).
Examining the entries of equation $P'^{-1} P \Lambda - \Lambda P'^{-1} P^ = 0$ yields that  $(\Lambda_{ii} - \Lambda_{jj})(P'^{-1} P)_{ij} = 0$.
If $s^i$ and $s^j$ are contained in separate partitions, then $\Lambda_{ii} - \Lambda_{jj} \neq 0$, so $P'^{-1}P_{ij} = 0$.
Thus, $P' P^{-1}$ has the desired block-diagonal structure.
Since we know that both $P^{-1}$ and $P'$ are invertible, so must be $P'^{-1} P$.
Thus, we know the blocks are invertible as well. \hfill$\square$

\subsection{Proof of Theorem \ref{thm:coarse-grain-marg-and-trans}}\label{sec:appendix-proof-main-result}

Section \ref{sec:trans-partitions} claims that applying the matrix $P' R \diag(R^T P'^{-1} m_\infty)$ is a similarity transformation $\tilde P$ that satisfies the implication of Theorem \ref{thm:coarse-grain-marg-and-trans}.
As a reminder, $P'$ are the eigenvectors from the eigendecomposition of the matrix in \cref{eqn:rand-sim-obs-sum} and $R$ is a random block-diagonal rotation matrix with the same block structure as $P P'^{-1} = Q$ (Lemma \ref{lem:nonunique-obs-nonunique-P}), whose blocks are distributed over the Haar measure over the corresponding copy of $SO(n)$.

We begin our argument by first applying the similarity transformation $\tilde P$ to a learned PSR $m_0$, $\{M^{ao}: a \in \mc A, o \in \mc O\}$ and $m_\infty$.
We find that
\begin{align}
    m_0 \tilde P                  & = b_\pi Q R \diag(R^T Q^{-1} \mathbf 1)\label{eqn:apply_tilde_P_m_0}                                              \\
    \tilde P^{-1} M^{ao} \tilde P & =  \diag(R^T Q^{-1} \mathbf 1)^{-1} R^T Q^{-1} \cdot \left(T^a O^{ao}\right)\cdot Q R \diag(R^T Q^{-1} \mathbf 1) \\
    \tilde P^{-1} m_\infty        & =  \diag(R^T Q^{-1} \mathbf 1)^{-1} R^T Q^{-1} \cdot\mathbf 1\label{eqn:apply_tilde_P_m_infty}
\end{align}
If we unpack the block structure of $Q$, $R$, and $\diag(R^T Q^{-1} m_\infty)$ in \cref{eqn:apply_tilde_P_m_0,eqn:apply_tilde_P_m_infty}, we find
\begin{align}
    [m_0 \tilde P]_{S_i}           & = b_\pi Q_i R_i \diag(R_i^T Q_i^{-1} [\mathbf 1]_{S_i})\label{eqn:apply_tilde_P_m_0_blocks}                                   \\
    [\tilde P^{-1} m_\infty]_{S_i} & = \diag(R_i^T Q_i^{-1} \mathbf [1]_{S_i})^{-1} R_i^T Q_i^{-1} \cdot [\mathbf 1]_{S_i}\label{eqn:apply_tilde_P_m_infty_blocks}
\end{align}
where $R_i$ and $Q_i$ are the blocks associated with the full-rank observability partition $S_i$, and $[U]_{\mc I}$ represents the values of the vector or matrix-valued quantity $U$ indexed by a set of indices $\mc I$.

First, we must justify that the relations expressed in \crefrange{eqn:apply_tilde_P_m_0}{eqn:apply_tilde_P_m_infty_blocks} are well-defined.
We already know $R$ and $Q$ are nonsingular.
We must then show all entries of the vector $R^T Q^{-1} \mathbf 1$ are nonzero to allow for the existence of $\diag(R^TQ^{-1}\mathbf 1)$.
This fact is a consequence of known properties of the Haar measure over special orthogonal matrices \cite[Section 1.2]{meckesRandomMatrix2019}.
Fix a full-rank observability partition $S_i$.
It is known that corresponding rotation matrix blocks $R_i^T$ and $R_i$ are identically distributed with respect to the Haar measure on $SO(|S_i|)$ \cite[pg. 18]{meckesRandomMatrix2019}.
Furthermore, since $Q_i$ is nonsingular, we know that $Q_i^{-1} \mathbf 1$ is not the zero vector.
Thus, it is also known that the random vector $R_i^T \cdot (Q_i^{-1} [\mathbf 1]_{S_i})$ is uniformly distributed over the $(|S_i|-1)$-sphere with radius $\norm{Q_i^{-1} [\mathbf 1]_{S_i}}_2$ \cite[pg. 19-20, 26]{meckesRandomMatrix2019}. By the same argument discussed in the proof of Lemma \ref{lem:unique-obs-unique-eigvals}, the entries of $R_i^T Q_i^{-1}[\mathbf 1]_{S_i}$ must be nonzero with probability one.
By taking a union bound over all full-rank observability partitions, \textit{all} entries of $R^T Q^{-1} \mathbf 1$ must be nonzero with probability one as well.

What remains is to prove the correctness of the relations \crefrange{eqn:sum_psr_vec_parts}{eqn:obtain_sum_vector} in Theorem~\ref{thm:coarse-grain-marg-and-trans}.
The expression that we obtain $\tilde P^{-1} m_\infty = \mathbf 1$ is immediate from \cref{eqn:apply_tilde_P_m_infty,eqn:apply_tilde_P_m_infty_blocks}.
First, we justify \cref{eqn:sum_psr_vec_parts}.
Fix an full-rank observability partition $S_i$.
Then
\begin{align*}
    \sum_{i \in S_i} [\tilde b_\pi]_i
     & = [m_0 \tilde P]_{S_i}^T \cdot [\mathbf 1]_{S_i}                                                                                                            \\
     & = [b_\pi^T]_{S_i} Q_i R_i \diag(R_i^T Q_i^{-1} [\mathbf 1]_{S_i}) \cdot \diag(R_i^T Q_i^{-1} [\mathbf 1]_{S_i})^{-1} R_i^T Q_i^{-1} \cdot [\mathbf 1]_{S_i} \\
     & = [b_\pi^T]_{S_i} \cdot [\mathbf 1]_{S_i}                                                                                                                   \\
     & = \sum_{i \in S_i} [b_\pi]_i
\end{align*}
The second line applies \cref{eqn:apply_tilde_P_m_0_blocks,eqn:apply_tilde_P_m_infty_blocks}.
The proof for \cref{eqn:sum_apply_paz} is nearly the same, and can be reached by deriving an analagous expression to \cref{eqn:apply_tilde_P_m_0_blocks} by first multiplying out the corresponding sequence of matrices $\tilde P^{-1} M^{ao} P$ and unpacking the block structure of $Q$ and $R$ again.\hfill$\square$

\subsection{Proof of Full-Rank Transition Claim}\label{sec:appendix-full-rank-trans}

We formally state the claim made in the deliberation of Section \ref{sec:main-result}

\begin{proposition}\label{prop:full-rank-actions}
    Let $T$ be an $n\times n$ matrix, with rows that are all zeros except for a single entry of 1 per row. Let $p \in [0, 1)$, and $p \neq 1/2$.
    Then the convex combination $p T + (1 - p) I$ is nonsingular.
\end{proposition}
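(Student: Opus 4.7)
The plan is to recast the claim as a statement about the eigenvalues of $T$ and then exploit the combinatorial structure of $T$. I would first dispatch the trivial case $p = 0$, where $pT + (1-p)I = I$ is nonsingular. For $p \in (0,1)$, factor
\[
pT + (1-p)I \;=\; p\bigl(T - \mu I\bigr), \qquad \mu := -\tfrac{1-p}{p},
\]
so that, since $p > 0$, nonsingularity reduces to showing that $\mu$ is not an eigenvalue of $T$.

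The main step is to pin down the spectrum of $T$. Because each row of $T$ has a single $1$, $T$ encodes a function $f : \{1, \dots, n\} \to \{1, \dots, n\}$ whose functional graph is a disjoint union of directed cycles with rooted trees feeding into them. Reorder the state indices so that non-cycle (``tree'') nodes come first in a topological order from leaves toward their cycle, followed by the nodes of each cycle contiguously in cyclic order. This brings $T$ into block upper-triangular form in which the tree-on-tree block is strictly upper triangular (every tree node points to a later index, and never to itself), while the remaining diagonal blocks are cyclic permutation matrices of sizes $\ell_c$. The characteristic polynomial therefore factors as $\lambda^d \prod_c (\lambda^{\ell_c} - 1)$, where $d$ is the number of tree nodes; every eigenvalue of $T$ is consequently either $0$ or a root of unity, and in particular the only \emph{real} eigenvalues lie in $\{-1, 0, 1\}$.

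The finish is then direct. Since $\mu$ is real, it can coincide with an eigenvalue of $T$ only if $\mu \in \{-1, 0, 1\}$. However $\mu = 0$ forces $p = 1$, $\mu = 1$ gives the contradiction $-1 = 0$, and $\mu = -1$ forces $p = 1/2$, all excluded by the hypotheses $p \in [0,1)$ and $p \neq 1/2$. Hence $T - \mu I$ is invertible and $pT + (1-p)I = p(T - \mu I)$ is nonsingular.

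The hard part is the spectral characterization; a cleaner alternative that avoids the explicit block-triangular reordering would be to observe that on a finite set $f^{D+L} = f^D$ for some $D \geq 0$ and $L \geq 1$ (take $L$ to be the least common multiple of cycle lengths), so $T^{D+L} = T^D$, and any eigenvalue satisfies $\lambda^D(\lambda^L - 1) = 0$, yielding the same conclusion that $\lambda = 0$ or $\lambda$ is an $L$-th root of unity.
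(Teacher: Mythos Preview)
Your proof is correct and follows the same overall strategy as the paper: handle $p=0$ separately, then for $p\in(0,1)$ reduce nonsingularity of $pT+(1-p)I$ to the statement that $\mu=(p-1)/p$ is not an eigenvalue of $T$, and finally establish that every eigenvalue of $T$ is either $0$ or a root of unity so that the only real candidates are $\{-1,0,1\}$, all of which are excluded by the hypotheses.

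Where you diverge from the paper is in the proof of the spectral lemma. The paper proceeds by induction on $n$: if $T$ is a permutation matrix the eigenvalues are roots of unity; otherwise some column is zero, and cofactor expansion along that column gives $\phi(T)=-\lambda\cdot\phi(T')$ for a submatrix $T'$ of the same type, so the induction goes through. You instead read off the spectrum directly from the functional-graph structure of $f$ via a block upper-triangular reordering (tree nodes in topological order, then cycles), obtaining the factorization $\lambda^d\prod_c(\lambda^{\ell_c}-1)$ in one stroke. Your alternative argument---$f^{D+L}=f^D$ on a finite set implies $T^{D+L}=T^D$, hence $\lambda^D(\lambda^L-1)=0$ for every eigenvalue---is shorter still and sidesteps any reordering entirely. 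The paper's induction is perhaps more self-contained for a reader unfamiliar with functional graphs, while your arguments are more structural and make the source of the root-of-unity eigenvalues transparent; both are equally valid here.
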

\begin{proof}
    We first observe that the proof is immediate if $p = 0$, so we focus on the case for $p \in (0,1)$.
    Suppose, for the sake of contradition, that there exists $v \in \R^n$ such that matrix-vector product $\left(pT + (1-p)I\right) v = 0$. This must be true if and only if
    $$
        Tv = \left(\frac{p - 1}{p}\right)v,
    $$
    or that $v$ is an eigevector of $T$ with eigenvalue $(p-1)/p$.

    We claim that the eigenvalues of $T$ are either zero or roots of unity. If this claim is true, we arrive at a contradiction, because if $p \neq 1/2$ and $p \in (0, 1)$, then $(p-1) / p$ cannot be equal $-1$.

    We prove this claim by induction on the number of rows and columns.
    As the base-case, we take a $1 \times 1$ ``matrix" $\begin{pmatrix} 1 \end{pmatrix}$.
    The eigenvalue of this matrix is unity.
    Next, we assume that the claim holds for $m \times m$ matrices with rows of all zeros except for a single one.
    Suppose we have a $(m + 1) \times (m + 1)$ matrix $T'$ of the same structure.
    If $T'$ is a permutation matrix, then we know its eigenvalues are roots of unity \cite{dingWhenMatrix2014},
    so suppose that $T'$ is not a permutation matrix.
    Then $T'$ must have at least one columns that is all zeros.
    We then examine the characteristic polynomial $\phi(T')$.
    Without loss of generality, assume that column is the first column of the matrix. Then, we can write out the expression for the characteristic polynomial~$\phi(T')$:
    \begin{align*}
        \phi(T')
         & = \det\left(T' - \lambda I\right)              \\
         & = \det\left(
        \begin{array}{c | c}
                -\lambda & \cdots                  \\
                \hline
                0        & T'_{2:, 2:} - \lambda I \\
            \end{array}\right)                \\
         & = -\lambda \cdot \det(T'_{2:, 2:} - \lambda I) \\
         & = -\lambda \cdot \phi(T'_{2:,2:})
    \end{align*}
    where $T'_{2:, 2:}$ is the $m \times m$ submatrix of $T'$ that omits the first row and column of $T'$. This submatrix is also a matrix with all zeros for every row except for a single one, since we eliminated a column of only zeros from $T'$.
    Thus, we know that the eigenvalues of $T'$ are $0$ and the eigenvalues of $T'_{2:, 2:}$, which, by the induction hypothesis, are also zero and roots of unity.\hfill$\square$
\end{proof}

\subsection{Proof of Theorem \ref{thm:same_hankel}}\label{sec:proof_unid}

We first begin by characterizing the stochastic process $\{s_t\}_{t=0}^{\infty}$, the marginals of states variables during exploration ($a_t \sim \Unif(\mc A)$ i.i.d.). We begin by proving the following lemma:

\begin{lemma}
    The stochastic process $\{s_t\}_{t\geq0}$ is a Markov chain, and its associated transition matrix is $$T = \frac 1 {|\mc A|} \sum_{a \in \mc A} T^a.$$
\end{lemma}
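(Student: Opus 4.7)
The plan is to verify the Markov property directly from the definition, and then compute the one-step transition kernel by marginalizing over the action, which is chosen uniformly and independently of the past.

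First I would fix any $t \geq 0$, any trajectory $(s_0, a_0, o_0, \dots, s_{t-1}, a_{t-1}, o_{t-1}, s_t)$ with positive probability under the exploration law, and any target state $s^j$. Conditioning on the whole past, I expand
\begin{align*}
\mathbb{P}(s_{t+1} = s^j \mid s_0, a_0, o_0, \dots, s_t)
&= \sum_{a \in \mathcal{A}} \mathbb{P}(s_{t+1} = s^j \mid s_0, \dots, s_t, a_t = a) \\
&\qquad\qquad \cdot \mathbb{P}(a_t = a \mid s_0, \dots, s_t).
\end{align*}
The first factor collapses to $T^a_{ij}$ (with $s_t = s^i$) by the POMDP transition model, which depends only on the current state and action. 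The second factor collapses to $1/|\mathcal{A}|$ because under the memoryless uniform exploration policy $a_t$ is independent of everything that has happened up to time $t$. Hence
\[
\mathbb{P}(s_{t+1} = s^j \mid s_0, a_0, o_0, \dots, s_t) = \frac{1}{|\mathcal{A}|} \sum_{a \in \mathcal{A}} T^a_{ij},
\]
which depends only on $s_t = s^i$, establishing the Markov property and simultaneously identifying the transition matrix as $T = \tfrac{1}{|\mathcal{A}|} \sum_{a} T^a$.

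The only subtlety, which I would address explicitly, is that the conditioning in the preceding display is on the POMDP state sequence and not merely on the observables $(a_k, o_k)$; marginalizing observations out is straightforward since they do not enter the one-step transition law once the state and action are given. The result is row-stochastic as a convex combination of row-stochastic matrices, as expected. I do not anticipate a significant obstacle: this is a direct consequence of action independence combined with the POMDP Markov property, and the calculation is essentially a one-line tower-of-expectations argument.
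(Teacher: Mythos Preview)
Your proposal is correct and uses essentially the same ingredients as the paper's proof: marginalize over the uniformly chosen, history-independent action and invoke the POMDP transition law. The only cosmetic difference is that the paper verifies the Markov property via the conditional-independence form $\PP(s_t, s_{t+2}\mid s_{t+1}) = \PP(s_t\mid s_{t+1})\PP(s_{t+2}\mid s_{t+1})$ and then computes $\PP(s_{t+1}\mid s_t)$ separately, whereas you condition on the entire past and show the result depends only on $s_t$, obtaining the Markov property and the transition matrix in one stroke; your formulation is the more standard one and, if anything, slightly cleaner.
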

\begin{proof}
    We first begin with conditional joint likelihood that we would like to factorize and unmarginalize present and past actions.
    Because actions are sampled i.i.d., we have that:
    \begin{align*}
         & \PP(s_t, s_{t+2} | s_{t+1}) \\
         & = \sum_{a_t, a_{t+1} \in \mc A} \PP(s_t, s_{t+2} | a_t, s_{t+1}, a_{t+1}) \PP(a_t | s_{t+1}) \PP(a_{t+1} | s_{t+1})              \\
         & = \sum_{a_t, a_{t+1} \in \mc A} \PP(s_t | a_t, s_{t+1}) \PP(s_{t+2} | s_{t+1}, a_{t+1})\PP(a_t | s_{t+1}) \PP(a_{t+1} | s_{t+1}) \\
         & = \sum_{a_t, a_{t+1} \in \mc A} \PP(s_t, a_t| s_{t+1}) \PP(s_{t+2}, a_{t+1}| s_{t+1})                                            \\
         & = \PP(s_t | s_{t+1}) \PP(s_{t+2} | s_{t+1})                                                                                      \\
    \end{align*}
    The third line applies the fact that $s_{t+2} \perp s_t | s_{t+1}, a_t, a_{t+1}$  and $s_{i} \perp a_{i+1} | s_{i+1}$ due to the conditional independencies induced by a POMDP.

    We then observe that the transition likelihood between two states is the averaged transition likelihood across all actions.
    \begin{equation}\label{eqn:s_chain_trans}
        \PP(s_{t+1} | s_t) = \sum_{a_t \in \mc A}\PP(s_{t+1} | s_t, a_t) \PP(a_t | s_t) = \sum_{a_t \in \mc A} \frac{\PP(s_{t+1} | s_t)}{|\mc A|}
    \end{equation}
    The last equality invokes that $a_t \sim \Unif$ i.i.d. Thus, we can conclude from \cref{eqn:s_chain_trans} that the transition matrix of this Markov chain $T$ can be written as the average of the transition matrices of the POMDP, e.g. $T = \sum_{a \in \mc A} T^a / |\mc A|$.
    \hfill $\square$
\end{proof}
We also observe that the stationary distribution of this Markov chain is $b_\pi$, as defined in the proof of Lemma \ref{lem:assumptions-consequences}, Appendix \ref{sec:proof-assumption-consequences} and is unique (otherwise, we would violate ergodicity of the Markov chain $\{s_t, a_t, o_t\}$).

Now, we are ready to prove the main claim of Theorem \ref{thm:same_hankel}.
Let the transition and diagonal observation matrices of the original POMDP be $\mc T = \{T^a : a \in \mc A\}$ and $\mc Z = \{O^{ao} : a \in \mc A, o \in \mc O\}$, and suppose that $\mc T': \{{T^a}' : a \in \mc A\}$ and $\mc Z' : \{{O^{ao}}': a \in \mc A, o \in \mc O\}$ are the transition and diagonal matrices of the POMDP after transformation $P$ is applied.
Since $b_\pi$ is the unique stationary distribution of Markov chain $\{s_t\}_{t \geq 0}$ evolving according to POMDP with transition matrices $\mc T$, then  $b_\pi$ must be a unique right-eigenvector of matrix $T = \sum_{a \in \mc A} T^a / |\mc A|$ with eigenvalue 1 \cite{norrisDiscretetimeMarkov1997}.
Therefore, the row-stochastic vector $b_\pi' = b_\pi P^{-1}$ must be a right vector of the transition matrix $T' = \sum_{a \in mc A} T'^a / |\mc A|$ because

\begin{align*}
     & b_\pi'^T \left(\frac 1 {|\mc A|} \sum_{a \in \mc A} T'^a \right)               \\
     & = b_\pi P^{-1} \left(\frac{1} {|\mc A|} \sum_{a \in \mc A} P T^a P^{-1}\right) \\
     & = b_\pi \left(\frac{1} {|\mc A|} \sum_{a \in \mc A}  T^a \right) P^{-1}        \\
     & = b_{\pi} P^{-1}                                                               \\
     & = b_{\pi}'
\end{align*}
From the relation above, we conclude that $b_\pi'$ must be a unique stationary distribution for the stochastic process $\{s'_{t}\}_{t \geq 0}$ that evolves according to POMDP transition matrices $\mc T'$.

Lastly, fix $\hist = (a_1, o_1, \dots, a_t, o_t)$ and $\ttest = (a_t, o_t, \dots, a_n, o_n)$.
We evaluate \crefrange{eqn:forward-derivation}{eqn:hankel-entry} to relate Hankel matrices $\HH$ and $\HH'$:
\begin{align*}
    \HH_{\hist, \ttest} & = b_\pi T^{a_1} O^{a_1 o_1} T^{a_2} O^{a_2 o_2} \cdots T^{a_n}O^{a_n o_n} \mathbf 1                                           \\
                        & = b_\pi P^{-1} P T^{a_1} P^{-1} PO^{a_1 o_1} P^{-1} P T^{a_2} P^{-1} \cdots P T^{a_n} P^{-1} P O^{a_n O_n} P^{-1} P \mathbf 1 \\
                        & = b_\pi' {T^{a_1}}' {O^{a_1o_1}}' {T^{a_2}}' {O^{a_2o_2}}' \cdots {T^{a_n}}' {O^{a_no_n}}' \mathbf 1                          \\
                        & = \HH'_{\hist, \ttest}.
\end{align*}
The third line involes the fact that the provided similarity transform in the perturbed sense-float-reset counterexample (\cref{eqn:counterexample}) has rows that sum to one.
Thus, we conclude that $\HH = \HH'$. \hfill$\square$

\section{Appendix: Additional Algorithmic Details}\label{sec:appendix-algorithm-details}
\subsection{Parameters Introduced for Finite Data}\label{sec:finite-data-params}


Our derivations so far have assumed to be in the asymptotic regime where we have made perfect estimates of the Hankel matrix.
In practice, with finite data, we only have the empirical Hankel matrix, $\HHhat$, which is subject to random perturbations.
Naturally, some adjustments to the calculations expressed in the previous section must be made to account for estimation error.
There are three operations where estimation error influences the learning procedure: rank estimation via truncated SVD, determining full rank transition matrices $M^a$, and obtaining the observability partition to compute the random matrix $R$ in Sec. \ref{sec:trans-partitions}.
For the Hankel matrix rank, we find that introducing a threshold on the low-rank approximation's reciprocal condition number to be sufficient.
To test for transition full-rankness, we found that a minimum singular value $\sigma_{\text{min}}$ threshold was acceptable.
To find the full-rank observability partition, we consider two observation distributions to be equivalent if their $L^1$ norm falls below a threshold~$\tau_{\mt{obs}}$.

While the transition and observation likelihoods computed from the data will converge to the values true values asymptotically, approximation error prevents us from directly reading the parameter estimates as probabilities \cite{guoPACRL2016,azizzadenesheliReinforcementLearning2016}.
Before using the learned model we project all parameters back to the probability simplex by minimizing the $L_2$ norm by quadratic programming.

Algorithm pseudocode can be found in \cref{alg:learn-pomdp}.
We note that for all experiments, while it is theoretically correct to construct a block-diagonal rotation matrix $R$ by the full-rank observability partition as stated above, we find in practice it is sufficient to multiply by fully-dense random rotation matrix $R'$ after computing the initial SVD (line \ref{ln:svd}).
This modification uses $A R'$ and $R'^T V$ to compute the linear PSR and takes $R = I$ instead at line \ref{ln:block-rand-rot}.
We still require a $\tau_{obs}$ parameter to compute the partition-level transition errors in \cref{fig:all-plots,fig:swap-reward}.

\begin{algorithm}[t]
\caption{Learn-POMDP}\label{alg:learn-pomdp}
\begin{algorithmic}[1]
    \Require Dataset $\mc D = (a_1o_1, a_2o_2\dots)$, reciprocal cond. number $1 / \kappa$, substring length $L$, minimum trans. mat singular value $\sigma_{min}$, observation sim. threshold $\tau_{obs}$:
    \State $\mathit{substrings} \gets \{(a_io_ia_{i+1}o_{i+1}\dots a_{i+k}o_{i+k})\}_{i=1}^{|\mc D|/2 - L}$
    \State $\HH \gets \tsc{EstimateHankel}(\mathit{substrings})$ \label{lp-ln:estimate-hankel} \Comment{Entries estimated via Eq. \ref{eqn:emp-hankel-est}.}
    \State $U, \Sigma, V^T \gets \tsc{TruncatedSVD}(\HH, r, 1 / \kappa)$
    \State $A \gets U \Sigma$ \label{ln:svd}
    \State $m_0, \{M^{ao}: a \in \mc A, o \in \mc O\},  m_\infty \gets \tsc{ComputePSR}(A, V^T, \HH)$ \Comment{via Eqs. \ref{eqn:trans-obs-solve}-\ref{eqn:final-vec-solve}.}
    \State $M_{obs} = []$
    \For{$a \in \mc A$}
        \State $M^a \gets \sum_{o \in \mc O} M^{ao}$
        \If{$\tsc{MinSingularValue}(M^a) > \sigma_{min}$} \Comment{Detect full-rank actions.}
            \For{$o \in \mc O$}
                \State $(M_{obs})$.append($M^{ao}(M^a)^{-1}$)
            \EndFor
        \EndIf
    \EndFor
    \State $w_1, \dots w_{|M_{obs}|} \sim \Unif(\mb S^{|M_{obs}| - 1})$
    \State $P' \gets \tsc{Eigenvectors}(\sum_{i=1}^{|M_{obs}|} w_i (M_{obs})_i)$ \Comment{via Eq. \ref{eqn:rand-sim-obs-sum}. Eigenvectors form columns of $P'$.}
    \For{$M^{ao} (M^a)^{-1} \in M_{obs}$}
        \State $O^{ao} \gets P'^{-1} M^{ao} (M^a)^{-1} P'$
    \EndFor
    \State $[S_1, \dots, S_k] \gets \tsc{DetectPartitions}(\{O^{ao}\}, \tau_{obs})$  \Comment{via procedure in Sec. \ref{sec:finite-data-params}.}
    \For{$S_i \in [S_1, \dots, S_k]$}
        \State $R_i \sim \Unif(SO(|S_i|))$
    \EndFor
    \State $R \gets \tsc{BlockDiag}([R_1, \dots, R_k], [S_1, \dots, S_k])$ \label{ln:block-rand-rot}
    \State $\tilde P \gets P' R \diag(R^T P'^{-1}m_\infty)$  \Comment{Blocks are specified by indices in partitions $S_1, \dots, S_k$.}
    \State $\tilde b \gets m_0 \tilde P$
    \For{$(a, o) \in \mc A \times \mc O$}
        \State $\tilde O^{ao} \tilde T^{a} \gets \tilde P M^{ao} \tilde P^{-1}$
    \EndFor
    \Ensure $\tilde b$, $\{\tilde O^{ao} \tilde T^a: a \in \mc A, o \in \mc O\}$ 
\end{algorithmic}
\end{algorithm}

\subsection{Runtime Complexity in Floating-Point Operations}\label{sec:runtime}
\newcommand{\aonobs}{(|\mc A||\mc O|)^{\nobs+1}}
The runtime of our approach, which we measure in floating-point operations, is dominated by the rank factorization of the Hankel matrix and computation of the PSR update matrices.
We define the \textit{full-observability length} of a POMDP (and notate as $\nobs$) to be the smallest length of histories and tests so that the Hankel matrix, whose rows and columns are indexed by action-observation sequences enumerated up to this length, is full-rank.
Suppose we are given a Hankel matrix that enumerates histories up to $\nobs + 1$ in the rows (so that $A_{\hists^{-ao}, :}$ in \cref{eqn:trans-obs-solve} is full-rank) and tests up to length $\nobs$ in the columns.
The size of the Hankel matrix, then, must be $O(\aonobs) \times O(\aonobs)$.
Computing the truncated SVD with an appropriately set singular value, threshold, then, has runtime $O(|\mc S|\cdot\aonnobs)$.
To compute the PSR update matrices, we pseudoinvert the right rank factor once, which also has complexity $O(|S| \cdot \aonnobs)$.
Then, to compute each $M^{ao}$, we must pseudoinvert the right factor $A_{\hists^{-ao}, :}$, which has runtime $O(|\mc S| \cdot \aominnobs)$, and then compute the product $A_{\hists^{-ao}, :}^{\dagger} \HH_{\hists^{ao}, :} (V^T)^{\dagger}$, which has runtime $$O\left(|\mc S| \cdot \aonnobsmin\right) + O\left(|\mc S|^2 \aonobs\right).$$
Putting everything together, we have a full runtime of:
\begin{equation}\label{eqn:runtime}
    O\left(|\mc S|  \aonnobs + |\mc S|^2 \aopnobs\right)
\end{equation}
Interestingly, our calculation suggests a runtime that is polynomial when the observability length $\nobs$ scales favorably when as the number of states of a particular class of POMDPs increases, which aligns with prior work on Hidden Markov Models \cite{huangMinimalRealization2016}.
Further investigation of computational tractability learning framework (e.g. PAC-learning) for the multi-action case would be an interesting direction of future work.

\section{Appendix: Additional Experimental Details}\label{sec:appendix-exps}

\subsection{Algorithm Parameter Selection}\label{sec:appendix-alg-params}

As discussed in Appendix \ref{sec:appendix-algorithm-details}, the behavior performance of our learning algorithm depends on a few manually-specified parameters.
This section reviews all the parameters that must be specified to run our approach, and the parameters values selected for our experiments (for a summary, see \cref{tab:exp-params}).

For Hankel estimation, of practical concern is the selection of the size Hankel matrix to estimate, or the sequences to include as row and column indices.
Like many other approaches \cite{hsuSpectralAlgorithm2012,balleSpectralLearning2014}, we use every possible action-observation sequence up to a certain length.
We expose this length as an algorithm parameter.
While smaller lengths will result in faster convergence of matrix entry estimates, selecting a length that is too short may result in a Hankel matrix whose approximate rank is strictly less than the number of states of the system.
Our chosen lengths are included in \cref{tab:exp-params}.
Automatically determining the proper Hankel size is an open question since the development of spectral approaches for PSRs \cite{wolfeLearningPredictive2005,bootsClosingLearningplanning2011,balleSpectralLearning2014}, and remains an interesting question for future work.

The main parameter associated with learning PSRs is centered around the number of singular components to be used when computing the rank factorization of the Hankel matrix (Section \ref{sec:psrs}).
As mentioned in Section \ref{sec:finite-data-params}, the main way to do this is by specifying a lower threshold on the empirical Hankel matrix lower rank approximation's reciprocal condition number.
Empirically, we observe that empirical Hankel matrices tend to become more singular as the amount of data used to estimate them increases (\cref{fig:rcond-over-time}).
While any sufficiently small positive threshold may work with large amounts of data, in practice, larger thresholds will more quickly identify the number of states, at risk of omitting states.
Of practical concern is the maximum number of singular values to compute to avoid computing an SVD of the \textit{entire} Hankel matrix.
The specified values for our experiments are shown in \cref{tab:exp-params} under $1 / \kappa$ and `No. SVD,' respectively.
\begin{figure}[t]
    \centering
    \includegraphics[width=\linewidth]{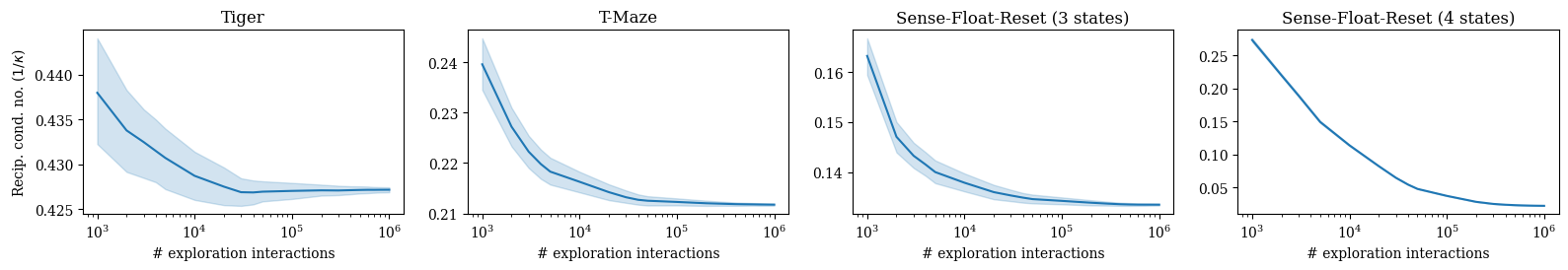}
    \caption{The ratio of the $r$th condition number over the largest condition number of Hankel matrices as the amount of observed data increases, where $r = |S|$ is the number of states of the POMDP. Each plot is averaged over 100 runs. The matrices become \textit{more} singular as the amount of data increases. The sizes of the Hankel matrix correspond with \cref{tab:exp-params}.}\label{fig:rcond-over-time}
\end{figure}

When recovering the observation distributions and partition-level transitions, we must specify thresholds to determine transition matrix full-rankness and a threshold that determines when observation distributions are similar (Section \ref{sec:finite-data-params}).
Inverting a near-singular matrix is highly undesirable when computing the matrices to joint-diagonalize in \cref{eqn:diag-matrix-prep}, so we specify conservatively high threshold on the smallest singular value on the smallest singular value $\sigma_{min}$ of the transition matrix.
As discussed in Appendix \ref{sec:finite-data-params}, in practice, a threshold $\tau_{obs}$ is not required to compute a random block-diagonal rotation matrix $R$. However, $\tau_{obs}$ is still required to compute partition-level transition likelihoods errors reported in \cref{fig:all-plots,fig:swap-reward}.
We set a conservativitely high threshold $\tau_{obs}$ to merge observation distributions aggressively when plotting those figures.


All PO-UCT planners require specification of a upper-confidence bound (UCB) constant to balance exploiting current estimated action value and exploring new actions.
For all experiments, we use a UCB constant of $c = 2$.
Our planners also limits all searches to a depth of three, and performs a fixed 1000 simulations per planning step.

\begin{table}
    \caption{Chosen parameters for each domain in planning experiments described in Section \ref{sec:experiments}, Fig. \ref{fig:all-plots}. Up and down arrows indicate a upper or lower threshold, respectively. The tuple reported for maximum indexing sequence lengths is ordered: (rows, columns).}\label{tab:exp-params}
    \centering
    \begin{tabular}{lllllll|}
        \toprule 
                                     & $1/\kappa\downarrow$ & $\sigma_{min}\uparrow$ & $\HH$ max. seq. len. $\uparrow$ & No. SVD $\uparrow$ & $\tau_{obs}\uparrow$\\ \midrule
        Tiger                        & 0.34                   & 0.1                    & (2, 1)                    & 20                 & 0.1\\ 
        T-Maze                       &     0.1                   &         0.01               & (2, 1)                    & 20                 & 0.1\\ 
        Sense-Float-Reset (3 states) & 0.1                    &                0.1        & (3, 2)                    & 20                 & 0.1\\ 
        Sense-Float-Reset (4 states) &       0.015                 &                   0.1     & (4, 3)                    & 20                & 0.5\\ 
        \bottomrule
    \end{tabular}
\end{table}

\subsection{Sensitivity Analysis and Wall-Clock Runtime Estimates}

Additionally, we have included an analysis on the sensitivity of the truncated SVD step to both Hankel size and rank tolerance $1 / \kappa$ (see Appendix \ref{sec:appendix-alg-params} for a description on parameters).
As experiments results in \cref{fig:all-plots} suggest, convergence of the number of states is a key step to the convergence of the overall algorithm.

\Cref{tab:tmaze-sa-states} reports the number of estimated states against a variety of Hankel sizes and rank tolerances (and the rank of the Hankel matrix in limit of infinite data).
Estimates of observation and transition likelihood error can be found in \cref{tab:tmaze-sa-obs-err,tab:tmaze-sa-trans-err} respectively.
Runtimes can be found in \cref{tab:tmaze-sa-runtimes}.
Maximum Hankel size was determined to be the largest to allow for a RAM allocation under 64Gb on eight cores allocated on an Intel Xeon Gold 6140 CPU on a shared cluster.
We observe that while more aggressive rank thresholds may arrive at the correct estimate with less data, they may also lead to an underestimation of the number of states.
Lower thresholds will more likely underestimate the number of states, and require more data before the correct estimate is reached.
Furthermore, larger Hankel sizes are required to estimate POMDPs with larger states, which tend to have longer observability lengths (Appendix \ref{sec:runtime}).
For example, a Hankel size that enumerates  histories of length four and tests of length three cannot fully capture a 14-state T-Maze.
As Hankel size increases, so do the acceptable thresholds to estimate the number of states.
Transition and observation likelihood errors, however, appear to be less influenced by Hankel size.
These results suggest that the largest possible Hankel accomodated by running time and memory should be used for ease of selection of the remaining algorithm parameters.

\begin{table}
    \caption{Sensitivity analysis on the \textbf{estimated rank} of the Hankel matrix based chosen Hankel sizes and rank tolerances estimated from $10^7$ interactions in T-Maze environments with varying numbers of states. Hankel size is represented in the maximum lengths of action-observation sequences used to index the row and columns of the Hankel matrix, respectively. Rank tolerance is specified as $1 / \kappa$, as discussed in Appendix \ref{sec:appendix-alg-params}. Results are reported up to two significant figures, with trailing zeros truncated for space. Hankel rank for the corresponding Hankel size in the limit of infinite data is included in the GT column. Results reported are mean and standard deviation of the number of estimated states, aggregated over 20 seeds. A value of `nan' is reported when no full-rank actions were found.}\label{tab:tmaze-sa-states}
    \centering
    \begin{tabular}{ll@{\hspace{1.50em}}r@{\hspace{1.50em}}llllll}
        \toprule
                               & $1 / \kappa$           & GT & 1e-01    & 1e-02       & 1e-03        & 1e-04       & 1e-05       & 1e-06     \\
        $n$ states             & $\mathcal H$ seq. len. &    &          &             &              &             &             &           \\
        \midrule
        \multirow[t]{3}{*}{4}  & (2, 1)                 & 4  & $4\pm 0$ & $4\pm 0$    & $6\pm .74$   & $13\pm .73$ & $14\pm .46$ & $14\pm 0$ \\
                               & (3, 2)                 & 4  & $4\pm 0$ & $4\pm 0$    & $20\pm 0$    & $20\pm 0$   & $20\pm 0$   & $20\pm 0$ \\
                               & (4, 3)                 & 4  & $4\pm 0$ & $4\pm 0$    & $20\pm 0$    & $20\pm 0$   & $20\pm 0$   & $20\pm 0$ \\
        \cline{1-9}
        \multirow[t]{3}{*}{6}  & (2, 1)                 & 5  & $4\pm 0$ & $5\pm 0$    & $7.9\pm .94$ & $16\pm .73$ & $18\pm .4$  & $18\pm 0$ \\
                               & (3, 2)                 & 6  & $5\pm 0$ & $6\pm 0$    & $20\pm 0$    & $20\pm 0$   & $20\pm 0$   & $20\pm 0$ \\
                               & (4, 3)                 & 6  & $6\pm 0$ & $6\pm 0$    & $20\pm 0$    & $20\pm 0$   & $20\pm 0$   & $20\pm 0$ \\
        \cline{1-9}
        \multirow[t]{3}{*}{8}  & (2, 1)                 & 6  & $5\pm 0$ & $6\pm 0$    & $10\pm 1.1$  & $20\pm .3$  & $20\pm 0$   & $20\pm 0$ \\
                               & (3, 2)                 & 8  & $6\pm 0$ & $8\pm 0$    & $20\pm 0$    & $20\pm 0$   & $20\pm 0$   & $20\pm 0$ \\
                               & (4, 3)                 & 8  & $6\pm 0$ & $8\pm 0$    & $20\pm 0$    & $20\pm 0$   & $20\pm 0$   & $20\pm 0$ \\
        \cline{1-9}
        \multirow[t]{3}{*}{10} & (2, 1)                 & 7  & $6\pm 0$ & $7\pm 0$    & $13\pm 1.4$  & $20\pm 0$   & $20\pm 0$   & $20\pm 0$ \\
                               & (3, 2)                 & 9  & $6\pm 0$ & $8\pm 0$    & $20\pm 0$    & $20\pm 0$   & $20\pm 0$   & $20\pm 0$ \\
                               & (4, 3)                 & 10 & $7\pm 0$ & $10\pm .3$  & $20\pm 0$    & $20\pm 0$   & $20\pm 0$   & $20\pm 0$ \\
        \cline{1-9}
        \multirow[t]{3}{*}{12} & (2, 1)                 & 8  & $7\pm 0$ & $8\pm 0$    & $17\pm .93$  & $20\pm 0$   & $20\pm 0$   & $20\pm 0$ \\
                               & (3, 2)                 & 10 & $7\pm 0$ & $9\pm 0$    & $20\pm 0$    & $20\pm 0$   & $20\pm 0$   & $20\pm 0$ \\
                               & (4, 3)                 & 12 & $8\pm 0$ & $14\pm 1.2$ & $20\pm 0$    & $20\pm 0$   & $20\pm 0$   & $20\pm 0$ \\
        \cline{1-9}
        \multirow[t]{3}{*}{14} & (2, 1)                 & 9  & $8\pm 0$ & $9\pm 0$    & $20\pm .57$  & $20\pm 0$   & $20\pm 0$   & $20\pm 0$ \\
                               & (3, 2)                 & 11 & $8\pm 0$ & $10\pm 0$   & $20\pm 0$    & $20\pm 0$   & $20\pm 0$   & $20\pm 0$ \\
                               & (4, 3)                 & 13 & $9\pm 0$ & $20\pm 0$   & $20\pm 0$    & $20\pm 0$   & $20\pm 0$   & $20\pm 0$ \\
        \cline{1-9}
        \bottomrule
    \end{tabular}
\end{table}

\begin{table}
    \caption{Sensitivity analysis on the \textbf{observation error} (in $L_1$ norm) associated with Table \ref{tab:tmaze-sa-states}. Estimates are only taken when the number of estimated states is equivalent to the ground truth POMDP. For all experiments the full-rank transition threshold $\sigma_{min}$ is set to $0.01$.  A value of `nan' is reported when no full-rank actions were found. If no standard deviation is included, only one seed of twenty succeeded in finding a full-rank action.}\label{tab:tmaze-sa-obs-err}
    \centering
    \begin{tabular}{llllllll}
        \toprule
                               & $1 / \kappa$           & 1e-01             & 1e-02             & 1e-03        & 1e-04 & 1e-05 & 1e-06 \\
        $n$ states             & $\mathcal H$ seq. len. &                   &                   &              &       &       &       \\
        \midrule
        \multirow[t]{3}{*}{4}  & (2, 1)                 & 0.09 $\pm$ 0.2    & 0.11 $\pm$ 0.28   & nan          & nan   & nan   & nan   \\
                               & (3, 2)                 & 0.031 $\pm$ 0.012 & 0.033 $\pm$ 0.015 & nan          & nan   & nan   & nan   \\
                               & (4, 3)                 & 0.026 $\pm$ 0.011 & 0.025 $\pm$ 0.012 & nan          & nan   & nan   & nan   \\
        \cline{1-8}
        \multirow[t]{3}{*}{6}  & (2, 1)                 & nan               & nan               & 0.47  & nan   & nan   & nan   \\
                               & (3, 2)                 & nan               & 0.2 $\pm$ 0.45    & nan          & nan   & nan   & nan   \\
                               & (4, 3)                 & 0.063 $\pm$ 0.025 & 0.077 $\pm$ 0.054 & nan          & nan   & nan   & nan   \\
        \cline{1-8}
        \multirow[t]{3}{*}{8}  & (2, 1)                 & nan               & nan               & nan          & nan   & nan   & nan   \\
                               & (3, 2)                 & nan               & 0.38 $\pm$ 0.24   & nan          & nan   & nan   & nan   \\
                               & (4, 3)                 & nan               & 0.59 $\pm$ 1.5    & nan          & nan   & nan   & nan   \\
        \cline{1-8}
        \multirow[t]{3}{*}{10} & (2, 1)                 & nan               & nan               & nan          & nan   & nan   & nan   \\
                               & (3, 2)                 & nan               & nan               & nan          & nan   & nan   & nan   \\
                               & (4, 3)                 & nan               & 0.29 $\pm$ 0.13   & nan          & nan   & nan   & nan   \\
        \cline{1-8}
        \multirow[t]{3}{*}{12} & (2, 1)                 & nan               & nan               & nan          & nan   & nan   & nan   \\
                               & (3, 2)                 & nan               & nan               & nan          & nan   & nan   & nan   \\
                               & (4, 3)                 & nan               & 1.7 $\pm$ 0.017   & nan          & nan   & nan   & nan   \\
        \cline{1-8}
        \multirow[t]{3}{*}{14} & (2, 1)                 & nan               & nan               & nan          & nan   & nan   & nan   \\
                               & (3, 2)                 & nan               & nan               & nan          & nan   & nan   & nan   \\
                               & (4, 3)                 & nan               & nan               & nan          & nan   & nan   & nan   \\
        \cline{1-8}
        \bottomrule
    \end{tabular}
\end{table}

\begin{table}
    \caption{Sensitivity analysis on the \textbf{transition error} (in $L_1$ norm) associated with Table \ref{tab:tmaze-sa-states}. Values are reported using the same estimation protocol as Table \ref{tab:tmaze-sa-obs-err}, except transition likelihoods were measured.}\label{tab:tmaze-sa-trans-err}
    \centering

    \begin{tabular}{llllllll}
        \toprule
                               & $1 / \kappa$           & 1e-01             & 1e-02              & 1e-03        & 1e-04 & 1e-05 & 1e-06 \\
        $n$ states             & $\mathcal H$ seq. len. &                   &                    &              &       &       &       \\
        \midrule
        \multirow[t]{3}{*}{4}  & (2, 1)                 & 0.095 $\pm$ 0.32  & 0.067 $\pm$ 0.21   & nan          & nan   & nan   & nan   \\
                               & (3, 2)                 & 0.018 $\pm$ 0.015 & 0.028 $\pm$ 0.045  & nan          & nan   & nan   & nan   \\
                               & (4, 3)                 & 0.017 $\pm$ 0.022 & 0.011 $\pm$ 0.0077 & nan          & nan   & nan   & nan   \\
        \cline{1-8}
        \multirow[t]{3}{*}{6}  & (2, 1)                 & nan               & nan                & 0.46  & nan   & nan   & nan   \\
                               & (3, 2)                 & nan               & 0.073 $\pm$ 0.13   & nan          & nan   & nan   & nan   \\
                               & (4, 3)                 & 0.021 $\pm$ 0.014 & 0.06 $\pm$ 0.091   & nan          & nan   & nan   & nan   \\
        \cline{1-8}
        \multirow[t]{3}{*}{8}  & (2, 1)                 & nan               & nan                & nan          & nan   & nan   & nan   \\
                               & (3, 2)                 & nan               & 0.033 $\pm$ 0.024  & nan          & nan   & nan   & nan   \\
                               & (4, 3)                 & nan               & 0.13 $\pm$ 0.36    & nan          & nan   & nan   & nan   \\
        \cline{1-8}
        \multirow[t]{3}{*}{10} & (2, 1)                 & nan               & nan                & nan          & nan   & nan   & nan   \\
                               & (3, 2)                 & nan               & nan                & nan          & nan   & nan   & nan   \\
                               & (4, 3)                 & nan               & 0.047 $\pm$ 0.038  & nan          & nan   & nan   & nan   \\
        \cline{1-8}
        \multirow[t]{3}{*}{12} & (2, 1)                 & nan               & nan                & nan          & nan   & nan   & nan   \\
                               & (3, 2)                 & nan               & nan                & nan          & nan   & nan   & nan   \\
                               & (4, 3)                 & nan               & 0.23 $\pm$ 0.0087  & nan          & nan   & nan   & nan   \\
        \cline{1-8}
        \multirow[t]{3}{*}{14} & (2, 1)                 & nan               & nan                & nan          & nan   & nan   & nan   \\
                               & (3, 2)                 & nan               & nan                & nan          & nan   & nan   & nan   \\
                               & (4, 3)                 & nan               & nan                & nan          & nan   & nan   & nan   \\
        \cline{1-8}
        \bottomrule
    \end{tabular}

\end{table}

\begin{table}
    \caption{
        \textbf{Runtime estimates} for sensitivity analysis on the T-Maze environments shown in Table \ref{tab:tmaze-sa-states}.
        All Hankel matrices were estimated at maximum size using sparse representations and then indexed to form smaller Hankel matrices, which is why there is little variation in runtime across T-Maze instances with varying numbers of states. All estimates are reported as mean and standard deviation over 20 seeds.
    }\label{tab:tmaze-sa-runtimes}
    
    \begin{subtable}[t]{0.65\textwidth}
    \begin{tabular}[t]{llll}
        \toprule
                                     &            & PSR (s)          & POMDP (s)        \\
        $\mathcal H$ seq. len.       & $n$ states &                  &                  \\
        \midrule
        \multirow[t]{6}{*}{$(2, 1)$} & 4          & 0.19 $\pm$ 0.098 & 0.2 $\pm$ 0.099  \\
                                     & 6          & 0.24 $\pm$ 0.12  & 0.25 $\pm$ 0.12  \\
                                     & 8          & 0.25 $\pm$ 0.11  & 0.25 $\pm$ 0.11  \\
                                     & 10         & 0.29 $\pm$ 0.12  & 0.29 $\pm$ 0.12  \\
                                     & 12         & 0.33 $\pm$ 0.15  & 0.34 $\pm$ 0.16  \\
                                     & 14         & 0.35 $\pm$ 0.13  & 0.36 $\pm$ 0.13  \\
        \cline{1-4}
        \multirow[t]{6}{*}{$(3, 2)$} & 4          & 1.1 $\pm$ 0.41   & 1.1 $\pm$ 0.41   \\
                                     & 6          & 1.6 $\pm$ 0.54   & 1.6 $\pm$ 0.54   \\
                                     & 8          & 2.3 $\pm$ 0.77   & 2.3 $\pm$ 0.77   \\
                                     & 10         & 3.1 $\pm$ 0.93   & 3.1 $\pm$ 0.93   \\
                                     & 12         & 3.9 $\pm$ 1.3    & 3.9 $\pm$ 1.3    \\
                                     & 14         & 6.1 $\pm$ 13     & 6.1 $\pm$ 13     \\
        \cline{1-4}
        \multirow[t]{6}{*}{$(4, 3)$} & 4          & 77 $\pm$ 32      & 77 $\pm$ 32      \\
                                     & 6          & 140 $\pm$ 33     & 140 $\pm$ 33     \\
                                     & 8          & 250 $\pm$ 75     & 250 $\pm$ 75     \\
                                     & 10         & 450 $\pm$ 130    & 450 $\pm$ 130    \\
                                     & 12         & 740 $\pm$ 220    & 740 $\pm$ 220    \\
                                     & 14         & 1.0e+3 $\pm$ 260 & 1.0e+3 $\pm$ 260 \\
        \cline{1-4}
        \bottomrule
    \end{tabular}
\end{subtable}
\begin{subtable}[t]{0.34\textwidth}
    \begin{tabular}[t]{ll}
        \toprule
        $n$ states & $\HH$ estim. time (s) \\
        \midrule
        4          & 910 $\pm$ 210         \\
        6          & 930 $\pm$ 190         \\
        8          & 960 $\pm$ 220         \\
        10         & 980 $\pm$ 220         \\
        12         & 970 $\pm$ 220         \\
        14         & 920 $\pm$ 200         \\
        \bottomrule
    \end{tabular}

\end{subtable}
\end{table}


We have also taken runtime estimates of the runtime of each component of the learning algorithm and PO-UCT search for the results reported in \cref{fig:all-plots}, which we have included in \cref{tab:runtimes}.
The algorithms have been implemented as unoptimized Python code running on two cores allocated from an Intel Xeon Gold 6140 CPU and 4Gb RAM on a shared cluster.
We observe the most expensive part of the algorithm is the estimation of the Hankel matrix, because its length and width scales exponentially as we extend the maximum length of enumerated histories and tests (Appendix \ref{sec:runtime}).
The remaining learning components of the algorithm can be highly vectorized, and when learning POMDPs of with small numbers of states (fewer than four states), their runtimes are relatively fast.

\begin{table}
    \caption{Runtime estimates from Fig. \ref{fig:all-plots}. PSR and POMDP columns are the total estimated times for learning each model, respectively. The last column describes the average planning time per planning step. The entries report mean and standard deviations, in seconds, up to two significant figures. Hankel estimates are the total amount of time to evaluate Eq. \ref{eqn:emp-hankel-est} on $10^6$ interactions.}\label{tab:runtimes}
    \centering
    \begin{tabular}{llllll}
        \toprule
                                     & $\mc H$ estim. (s) & PSR (s)            & POMDP (s)          & EM (s)                & Plan. (s/step) \\
        \midrule
        Tiger                        & 16 $\pm$ 8.6      & 0.013 $\pm$ .0061 & $0.015 \pm 0.0068$ & $3.7 \pm 2.4$         & $3.4 \pm 1.2$     \\
        T-Maze                       & 18 $\pm$ 12       & 0.022 $\pm$ 0.041  & $0.024 \pm 0.015$  & $5.7 \pm 4.2$         & $4.1 \pm 2$       \\
        SFR (3 states) & 27 $\pm$ 16       & $0.017 \pm 0.011$  & $0.019 \pm$ 0.011  & $15 \pm 10$           & $3.5 \pm 1.4$     \\
        SFR (4 states) & 83 $\pm$ 28       & $0.29 \pm 0.14$    & $0.29 \pm 0.14$    & 350 $\pm$ 250 & $4.7 \pm 1.6$     \\
        \bottomrule
    \end{tabular}
\end{table}

\subsection{Planning Performance of Different Sampling and Distribution Rounding Strategies}\label{sec:appendix-sampling-strategies}

There are many ways to sample action-observation trajectories when deriving a UCT-based search algorithm for planning on POMDPs.
As discussed by Silver and Veness \cite{silver_montecarlo_2010}, there are largely two approaches, which differ in how the latent state treated as the search propogates down a branch of the search tree.
\begin{enumerate}
    \item Upon choosing an action, propogate the full belief state using a process update (e.g. multiplying by $\tilde T^a$ of Theorem \ref{thm:coarse-grain-marg-and-trans}). Compute the mixture observation distribution weighted by that belief state, from which we can sample the emitted observation.
    \item Upon choosing an action, sample a \textit{single} latent state (or observability partition, in the context of Theorem \ref{thm:coarse-grain-marg-and-trans}). Look up the observation distribution associated with the action and sampled state, and then sample the observation.

\end{enumerate}
Silver and Veness \cite[Lemma 2]{silver_montecarlo_2010} prove that the observation distribution under these two sampling strategies are equivalent, so a UCT-based search will perform the same using either approach.
They also argue the latter is more computationally efficient for systems with a large number of states.

UCT-based search algorithm may only use one or some variant of both approaches when planning with the models learned by the algorithms discussed in this paper.
Because PSRs do not yield explicit transition likelihood estimates, we do not have the state distributions used to sample individual states for the second approach.
The first approach, however, can still be applied.
Given a PSR sufficient statistic $m$, we compute products $m \cdot M^{ao} \cdot m_\infty$ for the chosen action and all possible observations, yielding the observation likelihoods.
The learned partition-level POMDPs may apply the first approach in the same way.
Furthermore, the second approach may be applied to the learned partition-level POMDPs by sampling the next \textit{observability partition}, rather than state.
Given a partition-level belief $\tilde b$, we first compute the partition-level belief distribution by summing across across appropriate indices, and then sampling the current partition $S$.
We can then compute the \textit{conditional} observation distribution by first computing the \textit{conditional} partition-level belief vector
\begin{align*}
    \tilde  b_S & = \frac{\tilde b \otimes \mathbb I_{S}}{(\tilde b \otimes \mathbb I_{S})^T \cdot \mathbf 1}
\end{align*}
where $\mathbb I_{S}$ is a vector with entries of value one for indices in partition $S$ and zero otherwise, $a$ is the selected action by the search algorithm, and $\otimes$ the element-wise product.
The conditional observation distribution is then found by computing products $\tilde b_S \tilde T^a O^{ao}$ for all observations $o \in \mc O$, and projecting the distribution to deal with approximation error as handlied in the first approach.
Verifying the correctness of this calculation is a straightforward extension of the proof of Theorem~\ref{thm:coarse-grain-marg-and-trans}.

Our approach to handling rounding estimated likelihoods to proper probability distribution parameters is different across the two sampling strategies.
When planning using the first sampling approach, we first compute the estimated observation distribution, project the distribution, and then sample.
When planning with the second, we compute the estimated partition-level likelihoods, project the distribution, sample a \textit{partition}, and then sample the appropriate observation.
In practice, we do not observe an empirical difference between these sampling and rounding approaches for planning with rewards learned as observations.

\subsection{Slower Convergence of Transition Likelihoods}\label{sec:appendix-transition-convergence}
A common approach to convergence analysis of tensor decomposition methods would first argue the convergences of the SVD of our Hankel matrix and \textit{then} argue convergence of the eigendecompositions of the matrix discussed in \cref{eqn:rand-sim-obs-sum} \cite{moitraAlgorithmicAspects2018}.
PSRs only depend on the convergence of the SVD, while our learned POMDPs depends on the convergence of both the SVD and eigendecomposition.
In our reward-specification experiments (\cref{fig:swap-reward}), accurate transitions are required to correctly assign reward to the desired goal state.
The slow convergence of performance of the planner in the directional hallway environment suggests that more data is required to obtain accurate likelihood estimates of transition and diagonal observation matrices.

\subsection{Experimental Domains}\label{sec:appendix-domains}

\begin{figure}[t]
    \centering
    \includegraphics[width=\linewidth]{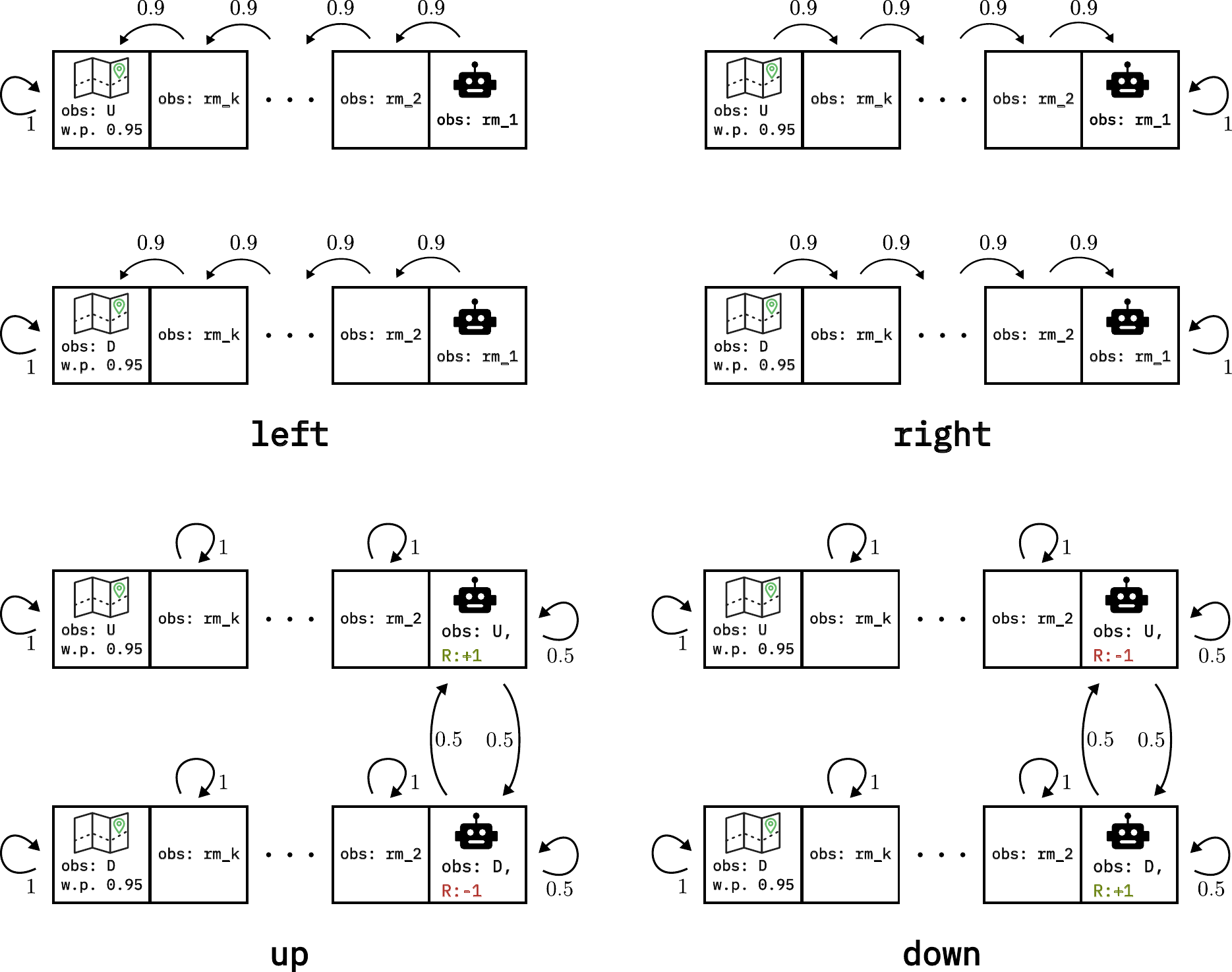}
    \caption{T-Maze dynamics and observation distributions. Edges are labeled with transition probabilities, and nonzero rewards are emitted deterministically from annotated states (and rewards of zero from non-annotated states). Self-loop edges with probability less than $1$ are omitted. Leftmost `map' states in the top hallway emit a \ttt{U} with probability with probability 0.95 and \ttt{D} with probability 0.05 (and vice-versa for the bottom hallway). All other observations are deterministic.}
    \label{fig:tmaze-dynamics}
\end{figure}

Here, we document any environment we have modified, or any novel environments we introduced in this work.
We have used the original Tiger domain as described by \cite{kaelbling_planning_1998}, which we omit from our discussion below.

\subsubsection{Sense-Float-Reset}

As discussed in Sec. \ref{sec:method}, the transition dynamics and observation emissions of Sense-Float-Reset are the same as those of Float-Reset introduced by \cite{littmanPredictiveRepresentations2001}, but augment the system with a passive sensing action.

\textbf{Transition dynamics}. In an $n$ state float-reset problem, the `reset' state is typically denoted as $s^0$ and the remaining states $\{s^1, s^2, \dots, s^{n-1}\}$.
The \texttt{float} action allows the system to translate to adjacent integer states (or loop at the ends):
\begin{align*}
    \PP(s_{t+1} = s^j | s_t = s^i, a_t = \texttt{float})
     & =
    \begin{cases}
        0.5, & i = j = 0, (n-1) \text{ or } i = j \pm 1, \\
        0    & \text{otherwise}                          \\
    \end{cases} \\
     & \forall i, j \in \{0, \dots, n-1\}.
\end{align*}

The \texttt{reset} action deterministically sets the state to $s^0$, e.g. $$\PP(s_{t+1}=s^0 | s_{i}, a_t=\texttt{reset}) = 1 \quad \forall i \in \{0, \dots, n-1\}.$$

The \texttt{sense} action does not change the state, e.g.

$$\PP(s_{t+1}=s^i| s_{t}=s^i, a_t=\texttt{sense}).$$

\textbf{Observation emissions}.
The \texttt{float} action only emits an observation of zero, e.g.

$$
    \PP(o_{t}=\texttt{0} | s_t = s^i, a_t=\ttt{float}) = 1 \quad \forall i \in \{0, \dots, n-1\}.
$$

The \ttt{reset} and \ttt{sense} actions emit a \ttt{1} when $s_t$ is in $s^0$ (the `reset state'), and \ttt{0} otherwise:

\begin{align*}
     & \PP(o_{t}=\ttt{1} | s_t = s^i, a_t=\ttt{reset})
    =\PP(o_{t}=\ttt{1} | s_t = s^i, a_t=\ttt{sense})   \\
     & \quad = \begin{cases}
                   1, & i = 0,            \\
                   0, & \text{otherwise,}
               \end{cases}                  \\
     & \PP(o_{t}=\ttt{0} | s_t = s^i, a_t=\ttt{reset})
    =\PP(o_{t}=\ttt{0} | s_t = s^i, a_t=\ttt{sense})   \\
     & \quad = \begin{cases}
                   1, & i \in \{1, \dots, n-1\}, \\
                   0, & \text{otherwise.}
               \end{cases}           \\
\end{align*}

\textbf{Reward function}.
In all of our experiments, we specify a deterministic tabular reward of $+1$ when the system exists $s^1$, and emit a reward of $0$ otherwise,

\begin{align*}
    \PP(r_t = r & | s_t = s^i, a_t = a)
    = \begin{cases}
          1, & r=+1, i = 1 \text{ or } r=0, i \in \{0, 2, \dots n-1\} \\
          0, & \text{otherwise.}
      \end{cases}                \\
                & \quad \forall a \in \{\ttt{float}, \ttt{reset}, \ttt{sense}\}.
\end{align*}

\subsubsection{T-Maze}\label{sec:appendix-tmaze}

We present a version of T-Maze similar to the one described by \cite{allenMitigatingPartial2024}.
Since we allow actions to determine observation emissions (\cite{allenMitigatingPartial2024} determine observations by states), our T-Maze POMDP has fewer states than their version.
This environment is more easily explained pictorally than explicit probability expressions.
See \cref{fig:tmaze-dynamics} for a depiction of transition dynamics and observation emissions.
For the truncated T-Maze used for experiments in Section \ref{sec:experiments}, \cref{fig:all-plots}, the number of room states was set to $k = 1$.

\subsubsection{Noisy Hallways}

\label{sec:appendix-noisy-hallway}

The transition dynamics common to \textit{directional hallway} and \textit{noisy hallway} can be found in \cref{fig:all-hallways}.
Across both domains, under the \ttt{stay} and \ttt{reset} actions, the environment will emit either \ttt{end-left} or \ttt{end-right} with probability $0.5$.
Furthermore, the left and the right states will emit \ttt{end-left} and \ttt{end-right}, respectively, with probability 0.8 under actions \ttt{left}, and will omit the incorrect observation (\texttt{end-right} from leftmost state, and vice-versa) with probability $0.2$.

The two domains differ on the observation distribution of the middle state under the actions \ttt{left} and \ttt{right}.
In the directional environment, under \ttt{left}, the observation \ttt{end-left} is emitted with probability $0.8$, and the \ttt{end-right} emitted with probability $0.2$.
Similaritly, under the \ttt{right} action, the \ttt{end-right} observation is emitted with probability $0.8$, and the \ttt{end-left} observation is emitted with probability $0.2$.
In the noisy environment, under both \ttt{left} and \ttt{right} actions, either \ttt{end-left} or \ttt{end-right} may be emitted with probability~$0.5$.

There is no reward function is given, since both of these experiments are used in the reward-specification experiments discussed in Sec. \ref{sec:experiments}, \cref{fig:swap-reward}. For the directional environment, the tuples that are assigned rewards are $(\ttt{left}, \ttt{end-left})$ and $(\ttt{right}, \ttt{end-right})$. For the noisy environment, the tuples assigned rewards are the elements of the set $\{\ttt{left}, \ttt{right}\} \times \{\ttt{end-left}, \ttt{end-right}\}$.

It is important to note that these domains are fully-recoverable by our algorithm, even though there are fewer observations than states.
This is because all actions aside from \ttt{reset} are full-rank and that the observation \textit{distributions} associated with these actions are distinct.

\begin{figure}[t]
    \centering
    \includegraphics[width=0.8\linewidth]{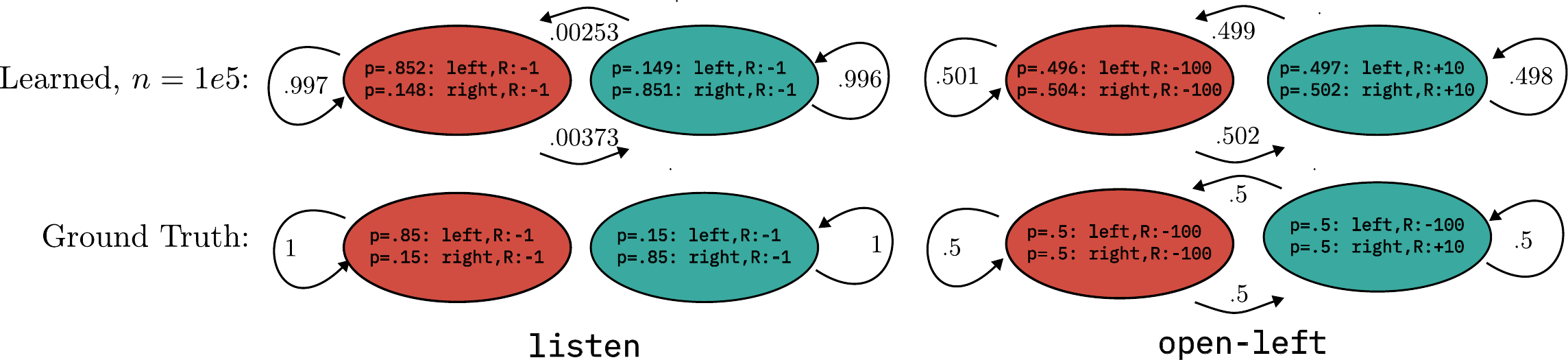}
    \vspace{-0.20cm}
    \caption{A comparison of a learned instance of Tiger after $10^5$ samples compared to the ground truth for the \texttt{listen} and \texttt{open-left} actions. For each action, nodes are annotated with their observation emission probabilities, and edges are annotated with their transition probabilities.}\label{fig:learned-tiger}
\end{figure}

\subsection{Example Output of Algorithm}\label{sec:appendix-example-output}

Here, we include an example of the learned model after estimates of transition and diagonal observation matrices have nearly converged.
In Tiger, where each state has a unique observation distribution, the learned model, as illustrated in \cref{fig:learned-tiger}, shows close agreement between the learned and ground-truth transition and diagonal observation matrices. These results confirm that by estimating the similarity transform, we can recover the true observation and transition models.

\end{document}